\let\originalparagraph\paragraph
\renewcommand{\paragraph}[2][.]{\originalparagraph{#2#1}}
\definecolor{puorange}{rgb}{0.80,0.20,0}
\definecolor{bluegray}{RGB}{65,105,225}
\definecolor{greengray}{rgb}{0.05,0.50,0.15}
\definecolor{darkbrown}{rgb}{0.40,0.2,0.05}
\definecolor{darkcyan}{RGB}{0,160,200}
\definecolor{black}{rgb}{0,0,0}
\definecolor{darkgray}{rgb}{0.3,0.3,0.3}
\definecolor{darkmagenta}{RGB}{139,0,139}
\definecolor{darkpurple}{RGB}{128,0,128}
\definecolor{darkred}{RGB}{90,0,0}
\definecolor{nicered}{RGB}{178,34,34}
\definecolor{darkorange}{RGB}{255,90,0}
\definecolor{darkblue}{RGB}{20,20,100}
\definecolor{verydarkorange}{RGB}{180,50,0}
\definecolor{teal}{RGB}{0,128,128}
\definecolor{indigo}{RGB}{75,0,130}
\newcommand{\mediumblue}[1]{\textcolor{bluegray}{#1}}
\newcommand{\mediumred}[1]{\textcolor{nicered}{#1}}
\newcommand{\orange}[1]{\textcolor{darkorange}{#1}}
\newcommand{\violet}[1]{\textcolor{darkpurple}{#1}}
\newcommand{\darkviolet}[1]{\textcolor{indigo}{#1}}
\newtheorem{theorem}{Theorem}[section]
\newtheorem{lemma}[theorem]{Lemma}
\newtheorem{corollary}[theorem]{Corollary}
\newcommand{\reals}{{\mathbb R}}
\newcommand{\idm}{\operatorname{I}}
\newcommand{\diag}{\operatorname{ diag}}
\DeclareMathOperator*{\argmin}{argmin}
\def\shortdisplay{\setlength{\abovedisplayskip}{5pt}%
	\setlength{\belowdisplayskip}{5pt}%
	\setlength{\abovedisplayshortskip}{2pt}%
	\setlength{\belowdisplayshortskip}{2pt}}
\let\oldselectfont\selectfont
\def\selectfont{\oldselectfont\shortdisplay}
\newcommand{\var}{x}
\newcommand{\dyn}{f}
\newcommand{\horizon}{\tau}
\newcommand{\chain}{g}
\newcommand{\dyninv}{\psi}
\newcommand{\stepsize}{\gamma}
\newcommand{\target}{v}
\newcommand{\inpt}{x}
\newcommand{\outpt}{y}
\newcommand{\hidden}{h}
\newcommand{\activ}{a}
\newcommand{\dimhidden}{p}
\newcommand{\diminpt}{d}
\newcommand{\softmax}{s}
\newcommand{\loss}{\ell}
\newcommand{\nxt}{\operatorname{next}}
\newcommand{\disp}{\lambda}
\newcommand{\pred}{c}
\newcommand{\param}{\theta}
\newcommand{\descent}{d}
\newcommand{\reg}{r}
\newcommand{\invsign}{-1}
\newcommand{\dimparam}{q}
\newcommand{\invsimp}{g}
\newcommand{\inter}{u}
\newcommand{\TP}{TP~}
\newcommand{\TPP}{TP}
\title{Target Propagation via Regularized Inversion}
\author{Vincent Roulet \& Zaid Harchaoui  \\
Department of Statistics\\
University of Washington
}
\begin{document}
\maketitle

\begin{abstract}
	Target Propagation (TP) algorithms compute targets instead of gradients along neural networks and propagate them backward in a way that is similar yet different than gradient back-propagation (BP).
The idea was first presented as a perturbative alternative to back-propagation that may improve gradient evaluation accuracy when training multi-layer neural networks~\citep{lecun1989gemini}. However, TP may have remained more of a template algorithm with many variations than a well-identified algorithm. Revisiting insights of~\citet{lecun1989gemini} and more recently of~\citet{lee2015difference}, we present a simple version of target propagation based on a regularized inversion of network layers, easily implementable in a differentiable programming framework. We compare its computational complexity to the one of BP and delineate the regimes in which TP can be attractive compared to BP. We show how our TP can be used to train recurrent neural networks with long sequences on various sequence modeling problems. The experimental results underscore the importance of regularization in TP in practice.
\end{abstract}

\section{Introduction}

Target propagation algorithms can be seen as perturbative learning alternatives to the gradient back-propagation algorithm, where virtual targets are propagated backward instead of gradients~\citep{lecun1986learning, lecun1989gemini, rohwer1990moving,mirowski2009dynamic,bengio2014auto, goodfellow2016deep}. A high-level summary is presented in Fig.~\ref{fig:target_prop_scheme}: while gradient back-propagation considers storing intermediate gradients in a forward pass,  target propagation algorithms proceed by computing and storing approximate inverses. The approximate inverses are then passed on backward along the graph of computations to finally yield a weight update for stochastic learning. 

Target propagation aims to take advantage of the availability of approximate inverses to compute better descent directions for the objective at hand.~\citet{bengio2013estimating, bengio2020deriving} argued that the approach could be relevant for problems involving multiple compositions such as the training of Recurrent Neural Networks (RNNs), which generally suffer from the phenomenon of exploding or vanishing gradients~\citep{hochreiter1998vanishing, bengio1994learning, schmidhuber1992learning}.
Recently, empirical results indeed showed the potential advantages of target propagation over classical gradient back-propagation for training RNNs on several tasks~\citep{manchev2020target}. However, these recent investigations remain built on multiple approximations, which hinder the analysis of the core idea of TP, i.e., using layer inverses. 

On the theoretical side, difference target propagation, a modern variant of target propagation, was related to an approximate Gauss-Newton method, suggesting interesting venues to explain the benefits of target propagation~\citep{bengio2020deriving,  meulemans2020theoretical}. Previous works have considered approximating inverses by adding multiple reverse layers~\citep{manchev2020target, meulemans2020theoretical, bengio2020deriving}. However, it is unclear whether such reverse layers actually learn layer inverses during the training process. Even if they were, the additional cost of computational complexity of learning approximate inverses should be carefully accounted for.

In this work, we propose a simple target propagation approach, revisiting the original insights of~\citet{lecun1989gemini} on the critical importance of the good conditioning of layer inverses. We define regularized inverses through a variational formulation and we obtain approximate inverses via these regularized inverses. In this spirit, we can also interpret the difference target propagation formula~\citep{lee2015difference} as a finite difference approximation of a linearized regularized inverse. We propose a smoother formula that can directly be integrated into a differentiable programming framework.

We detail the computational complexity of the proposed target propagation and compare it to the one of gradient back-propagation, showing that the additional cost of computing inverses can be effectively amortized for very long sequences. Following the benchmark of~\citet{manchev2020target}, we observe that the proposed target propagation can perform better than classical gradient-based methods on several tasks involving RNNs. 

The code to reproduce the experiments is provided at {\small \url{https://github.com/vroulet/tpri}}. The appendix details the implementations of the algorithms and discuss previous interpretations of target propagation as a Gauss-Newton method. 

\paragraph{Related work}
Many variations of back-propagation algorithms have been explored; see~\cite{werbos1994roots,goodfellow2016deep} for an extensive bibliography. Closer to target propagation, penalized formulations of the training problem have been considered to decouple the optimization of the weights in a distributed way or using an ADMM approach~\citep{carreira2014distributed,taylor2016training, gotmare2018decoupling}. Rather than modifying the backward operations in the layers, one can also modify the weight updates for deep forward networks by using a regularized inverse~\citep{frerix2018proximal}.~\citet{wiseman2017training} recast target propagation as an ADMM-like algorithm for language modeling and reported disappointing experimental results. Recently, in a careful experimental benchmark evaluation,~\citet{manchev2020target} explored further target propagation to train RNNs, mapping a sequence to a single final output, in an attempt to understand the benefits of target propagation to capture long-range dependencies, and obtained promising experimental results. Another line of research has considered synthetic gradients that approximate gradients using an additional layer instead of using back-propagated gradients~\citep{jaderberg2017decoupled, czarnecki2017understanding} to speed up the training of deep neural networks.
Recently, \citet{ahmad2020gait, dalm2021scaling}  considered using  analytical inverses to implement target propagation and blend it with what they called a gradient-adjusted incremental formula. Yet, an additional orthogonality penalty is critical for their approach to work.
Recently, \citet{meulemans2020theoretical} considered using  as many  reverse layers as  forwarding operations.  We focus here on the optimization gains of using target propagation that cannot be obtained by adding a prohibitive number of reverse layers.
Finally, we do not discuss the biological plausibility of TP since we are unable to comment on this. We refer the interested reader to, e.g.,~\citep{bengio2020deriving}.

\begin{figure}
	\begin{center}
		\includegraphics[width=0.9\linewidth]{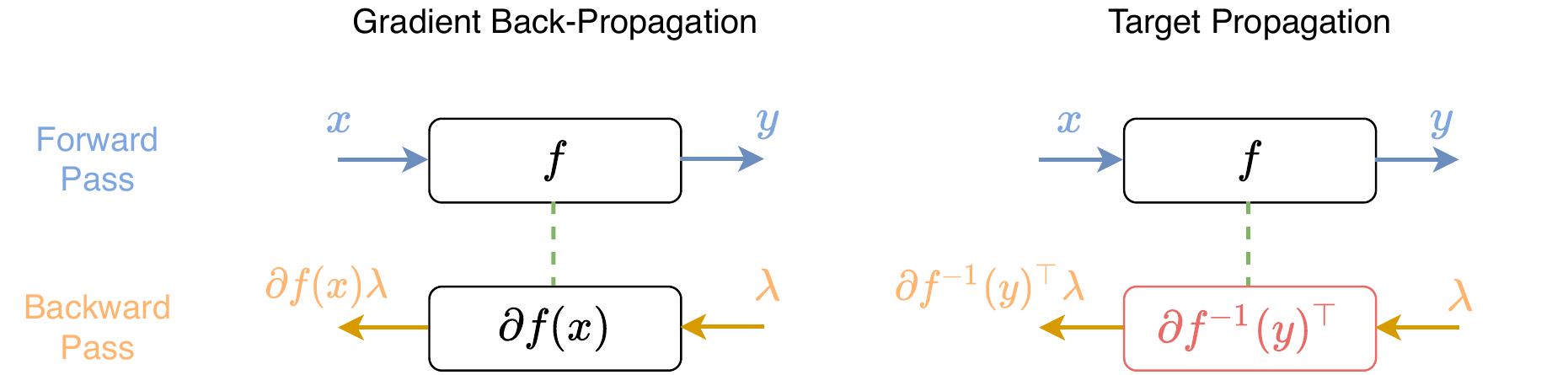}
		\caption{\small Our implementation of target propagation uses linearization of gradient  inverses instead of gradients in a backward pass akin to gradient back-propagation.\label{fig:target_prop_scheme}}
	\end{center}
\end{figure}

\paragraph{Notations}
For  $f:	\reals^p\times \reals^q \rightarrow \reals^d$, 
we  denote  
the partial derivative of $f$ w.r.t. $x \in \reals^p$ on a point $(x, y)\in \reals^p \times \reals^q$ as
$
\partial_x f(x, y) = \left({\partial f^j(x, y)}/{\partial x_i} \right)_{\substack{i, j}} \in \reals^{d\times p}.
$

\section{Target Propagation with Linearized Regularized Inverses}\label{sec:target_prop}
While target propagation was initially developed for multi-layer neural networks, we focus on its implementation for recurrent neural networks, as we shall follow the benchmark of~\citet{manchev2020target} in the experiments. Recurrent Neural Networks (RNNs) are also a canonical family of neural networks in which interesting phenomena arise in back-propagation algorithms. 

\paragraph{Problem setting}
A simple RNN  parameterized by $\param = (W_{\hidden\hidden}, W_{\inpt\hidden}, b_\hidden, W_{\hidden\outpt}, b_\outpt)$ maps a sequence of inputs $\inpt_{1:\horizon} = (\inpt_1, \ldots, \inpt_\horizon)$ to an output $\hat \outpt = \chain_{\param}( \inpt_{1:\horizon})$ by computing hidden states $\hidden_t\in \reals^\dimhidden$ corresponding to the inputs $x_t$. 

Formally, the output $\hat y$ and the hidden states $\hidden_t$ are computed as an output operation following transition operations defined as
\begin{align}
	\hat \outpt & =  \pred_\param(\hidden_\horizon) := \softmax(W_{\hidden \outpt} \hidden_\horizon +  b_\outpt),  \nonumber\\
	\hidden_t & = \dyn_{\param, t}(\hidden_{t-1})
	:= \activ(W_{ \inpt\hidden} x_t + W_{\hidden\hidden} \hidden_{t-1} + b_\hidden) \quad \mbox{for} \ t\in \{1, \ldots, \horizon\},\nonumber
\end{align}
where $\softmax$ is, e.g., the soft-max function for classification tasks,  $\activ$ is a non-linear operation such as the hyperbolic tangent  function, and the initial hidden state is generally fixed as $\hidden_0 = 0$. 
Given samples of sequence-output pairs ($\inpt_{1:\horizon}, \outpt)$, the RNN is trained to minimize the error $\loss(\outpt, \chain_{\param}( \inpt_{1:\horizon}))$ of predicting $\hat \outpt=\chain_{\param}( \inpt_{1:\horizon})$ instead of $\outpt$.

As one considers longer sequences, RNNs face the challenge of exploding/vanishing gradients ${\partial \chain_{\param}(\inpt_{1:\horizon})}/{\partial \hidden_t}$~\citep{bengio1995diffusion}; see Appendix~\ref{app:rnn} for more discussion. We acknowledge that specific parameterization-based strategies have been proposed to address this issue of exploding/vanishing gradients, such as orthonormal parameterizations of the weights~\citep{arjovsky2016unitary, helfrich2018orthogonal, lezcano2019cheap}. 
The focus here is to simplify and understand target propagation as a backpropagation-type algorithm using RNNs as a workbench. Indeed, training RNNs is an optimization problem involving multiple compositions for which approximate inverses can easily  be available. The framework could also be potentially applied to, {e.g.}, time-series or control models~\citep{roulet2019iterative}. 

Given the parameters $W_{\hidden\hidden}, W_{\inpt \hidden}, b_\hidden$ of the transition operations, we can get approximate inverses of $\dyn_{\param, t}(\hidden_{t-1})$ for all $t\in \{1, \ldots, \horizon\}$,  that yield optimization surrogates that can be better performing than the ones corresponding to regular gradients. We present below a \textit{simple version} of target propagation based on \textit{regularized inverses} and \textit{inverse linearizations}.

\paragraph{Back-propagating targets}
The  idea of target propagation is to compute virtual targets $\target_t$ for each layer $t=\horizon, \ldots, 1$ such that if the layers were able to match their corresponding target at time $t$, i.e., $ \dyn_{\param, t}(\hidden_{t-1}) \approx \target_t$, the  objective  would decrease. The final target $\target_\horizon$ is computed  as a  gradient step on the loss w.r.t. $\hidden_\horizon$.  The targets are then back-propagated using an approximate inverse\footnote{In the following, to ease the presentation, we  abuse notations and denote approximate inverses  by $\dyn_{\param, t}^{\invsign}$.} $\dyn_{\param, t}^{\invsign}$ of $\dyn_{\param, t}$ at each time step.

Formally, consider an RNN that computed $\horizon$ states $\hidden_1, \ldots, \hidden_\horizon$ from a sequence $\inpt_1, \ldots, \inpt_\horizon$ with associated output $y$. For a given stepsize $\stepsize_\hidden>0$, we propose to back-propagate targets by computing
\begin{align}\label{eq:final_targ}
	\target_\horizon & = \hidden_\horizon  -
	\stepsize_\hidden 
	{\partial_\hidden \loss(y, \pred_\param(\hidden_\horizon))}, 
	\\
	\target_{t-1} &= \hidden_{t-1} + 	{\partial_{\hidden}\dyn_{\param, t}^{-1}(\hidden_t) }^\top  (\target_t - \hidden_t), \quad \mbox{for}\  t \in \{\horizon, \ldots, 1\}. \label{eq:newtarg}
\end{align}
The update rule~\eqref{eq:newtarg} blends two ideas: i) regularized inversion; ii) linear approximation. We shall describe below that our update~\eqref{eq:newtarg} allows us to interpret the ``magic formula'' of difference target propagation in Eq.~15 of~\citet{lee2015difference} as $0$th-order finite difference approximation, while ours is a $1$st-order linear approximation. We shall also show that~\eqref{eq:newtarg} puts in practice an insight from~\citet{bengio2020deriving} suggesting to use the inverse of the gradients in the spirit of  a Gauss-Newton method.

Once all targets are computed, the parameters of the transition operations are updated such that the outputs of $\dyn_{\param, t}$ at each time step move closer to the given target. Formally,  the update consists of a gradient step with stepsize $\stepsize_\theta$ on the squared error between the targets and the current outputs, i.e., for $\param_\hidden \in \{W_{\hidden\hidden}, W_{\inpt\hidden}, b_\hidden\}$,
\begin{align}\label{eq:shoot}
	\param_\hidden^{\nxt} & = \param_\hidden - \stepsize_{\param} \sum_{t=1}^\horizon
	{\partial_{\param_\hidden}\| \dyn_{\param, t}(\hidden_{t-1}) - \target_t\|_2^2/2}.
\end{align}
As for the parameters $\param_\outpt = (W_{ \hidden \outpt}, b_\outpt)$ of the output operation, they are updated by a simple gradient step  on the loss with a stepsize $\stepsize_{\param}$.  

\subsection{Regularized Inversion}
To explore further the original idea of~\citet{lecun1989gemini}, we consider using the variational definition of the inverse,
\begin{align}
	\dyn_{\param, t}^{-1}(\target_t) & = \argmin_{\target_{t-1}\in \reals^\dimhidden} \|\dyn_{\param, t}(\target_{t-1}) - \target_t\|_2^2    =\argmin_{\target_{t-1}\in \reals^\dimhidden} \|\activ(W_{\inpt \hidden} \inpt_t + W_{\hidden\hidden}\target_{t-1} + b_\hidden ) - \target_t\|_2^2.    \label{eq:real_inv}
\end{align}
As long as $\target_t$ belongs to the image $\dyn_{\param, t}(\reals^\dimhidden)$ of  $\dyn_{\param, t} $, this definition recovers exactly the inverse of $\target_t$ by $\dyn_{\param, t}$. More generally, if $\target_t \not \in \dyn_{\param, t}(\reals^\dimhidden) $,  Eq.~\eqref{eq:real_inv} computes the \emph{best approximation} of the inverse in the sense of the Euclidean projection. When one considers an activation function $a$ and $\theta_\hidden = (W_{\hidden\hidden}, W_{\inpt\hidden}, b_\hidden)$, the solution of~\eqref{eq:real_inv} can easily be computed.

Formally, for the sigmoid, the hyperbolic tangent or the ReLU, their inverse can be obtained analytically for any $\target_t \in \activ(\reals^\dimhidden)$. 
So for $\target_t \in \activ(\reals^\dimhidden)$ and  $W_{\hidden\hidden}$ full rank, we get
\begin{align*}
	\dyn_{\param, t}^{-1}(\target_t) & = (W_{\hidden\hidden}^\top W_{\hidden\hidden})^{-1}W_{\hidden \hidden}^\top( \activ^{-1}(\target_t) -W_{\inpt \hidden} \inpt_t  - b_\hidden).
\end{align*}

If $\target_t \not \in \activ(\reals^\dimhidden)$, the minimizer of~\eqref{eq:real_inv} is obtained by first projecting $\target_t$ onto $\activ(\reals^\dimhidden)$, before inverting the linear operation. 
To account for non-invertible matrices $W_{\hidden\hidden}$, we also add a regularization  in the computation of the inverse. Overall we consider approximating the inverse of the layer by a regularized inverse of the form
\begin{equation*}
	\dyn_{\param, t}^{\invsign}(\target_t) = (W_{\hidden\hidden}^\top W_{\hidden\hidden} +\reg \idm)^{-1}W_{\hidden\hidden}^\top( \activ^{-1}(\pi(\target_t)) -W_{\inpt\hidden} \inpt_t  - b_\hidden),
\end{equation*}
with $\reg>0$ and $\pi$ a projection onto $a(\reals^\dimhidden)$.

\paragraph{Regularized inversion vs. parameterized inversion}
\citet{bengio2014auto, manchev2020target} parameterize the inverse as a reverse layer such that
\begin{equation*}
	\dyn_{\param, t}^{\invsign}(\target_t)  = \dyninv_{\param', t}(\target_t) := \activ(W_{ \inpt \hidden} \inpt_t + V\target_t  + c),
\end{equation*}
and learn the parameters $\param' = (V, c)$ for this reverse layer to approximate the inverse of the forward computations. 
The parameterized layer needs to be learned  to get a good approximation which involves numerically solving an optimization problem for each layer. These optimization problems come with a computational cost that can be better controlled by using regularized inversions presented earlier.

However, the approach based on parameterized inverses may lack theoretical grounding, as pointed out by~\citet{bengio2020deriving}, as we do not know how close the learned inverse is to the actual inverse throughout the training process. In contrast, the regularized inversion~\eqref{eq:real_inv} is less \textit{ad hoc} and clearly defined and, as we shall show in the experiments, leads to competitive performance on real datasets.

In any case, the analytic formulation of the inverse gives simple insights on an approach with parameterized inverses. Namely, the analytical formula suggests parameterizing the \emph{reverse layer} s.t.
(i) the reverse activation is defined as the inverse of the activation and not any activation, (ii)  the layer uses a non-linear operation followed by a linear one instead of the usual scheme, i.e., a linear operation followed by a non-linear one.

\subsection{Linearized Inversion}
Earlier instances of target propagation used direct inverses of the network layers such that the target propagation update formula would read $\target_{t-1} = \dyn_{\param, t}^{-1}(\target_t)$ in~\eqref{eq:newtarg}. Yet, we are unaware of a successful implementation of TP using directly the inverses.  To circumvent this issue, \citet{lee2015difference}  proposed the \emph{difference target propagation} formula that back-propagates the targets as
\[
\target_{t-1} = \hidden_{t-1} +  \dyn_{\param, t}^{-1}(\target_t) - \dyn_{\param, t}^{-1}(\hidden_t).
\]
If the inverses were exact, the difference target propagation formula would reduce to $\target_{t-1} = \dyn_{\param, t}^{-1}(\target_t)$. \citet{lee2015difference} introduced the difference target propagation formula to mitigate the approximation error of the inverses by parameterized layers.
The difference target propagation formula can naturally be interpreted as an approximation of the linearization used in~\eqref{eq:newtarg}, as
\begin{equation}\nonumber
	\dyn_{\param, t}^{-1}(\target_t) - \dyn_{\param, t}^{-1}(\hidden_t)
	=
	{\partial_{\hidden}\dyn_{\param, t}^{-1}(\hidden_t) }^\top  (\target_t - \hidden_t)
	+ O(\|\target_t-\hidden_t\|_2^2),
\end{equation}
where ${\partial_{\hidden}\dyn_{\param, t}^{-1}(\hidden_t) }^\top$ denotes the Jacobian of the inverse of the layer at $\hidden_t$.

We show in Appendix~\ref{app:exp_details} that the first-order approximation we propose~\eqref{eq:newtarg}
leads to slightly better training curves than the finite-difference approximation. 
Moreover, our interpretation illuminates the ``mystery'' of this formula, which appeared to be critical to the success of target propagation.

\section{Gradient Back-propagation versus Target Propagation}\label{sec:graph}
\paragraph{Graph of computations}
Gradient back-propagation and target propagation both compute a descent direction for the objective at hand. The  difference lies in the oracles computed and stored in the forward pass, while the graph of computations remains the same. To clarify this view, we reformulate target propagation  in terms of displacements
$\disp_t := \target_t - \hidden_t$ such that Eq.~\eqref{eq:final_targ}, ~\eqref{eq:newtarg} and~\eqref{eq:shoot} read
\begin{align*}
	\disp_\horizon  & = -
	\stepsize_\hidden 
	{\partial_\hidden \loss(y, \pred_\param(\hidden_\horizon))},
	\qquad  \quad
	\disp_{t-1}  =   {\partial_{\hidden}\dyn_{\param, t}^{-1}(\hidden_t) }^\top \disp_t, \quad \mbox{for}\  t \in \{\horizon, \ldots, 1\}, 
	\\
	\descent_{\param_\hidden}& =  \sum_{t=1}^\horizon  
	{\partial_{\param_{\hidden}} \dyn_{\param, t} (\hidden_{t-1})} {{ \disp_t}}, 
	\qquad  \quad \hspace{4pt}
	\param_\hidden^{\nxt} =  \param_\hidden + \stepsize_\hidden \descent_{\param_\hidden}.
\end{align*}
Target propagation amounts then  to computing a descent direction $\descent_{\param_\hidden}$ for the parameters~$\param_\hidden$ with a graph of computations, illustrated in Fig.~\ref{fig:target_prop}, analogous to that of gradient-back-propagation illustrated in Appendix~\ref{app:rnn}. 
The difference lies in the use of the Jacobian of the inverse
\[
{\partial_{\hidden}\dyn_{\param, t}^{-1}(\hidden_t) }^\top
\quad 
\mbox{instead of} \quad \partial_{\hidden} \dyn_{\param, t}(\hidden_{t-1}).
\]
The implementation of TP with the formula~\eqref{eq:newtarg} can be done in a differentiable programming framework, where, rather than computing the gradient of the layer, one evaluates the inverse and keep the Jacobian of the inverse.  With the precise graph of computation of TP and BP, we can compare  their computational complexity explicitly and bound the difference of the directions they output.

\begin{figure*}
	\begin{center}
		\includegraphics[width=0.8\linewidth]{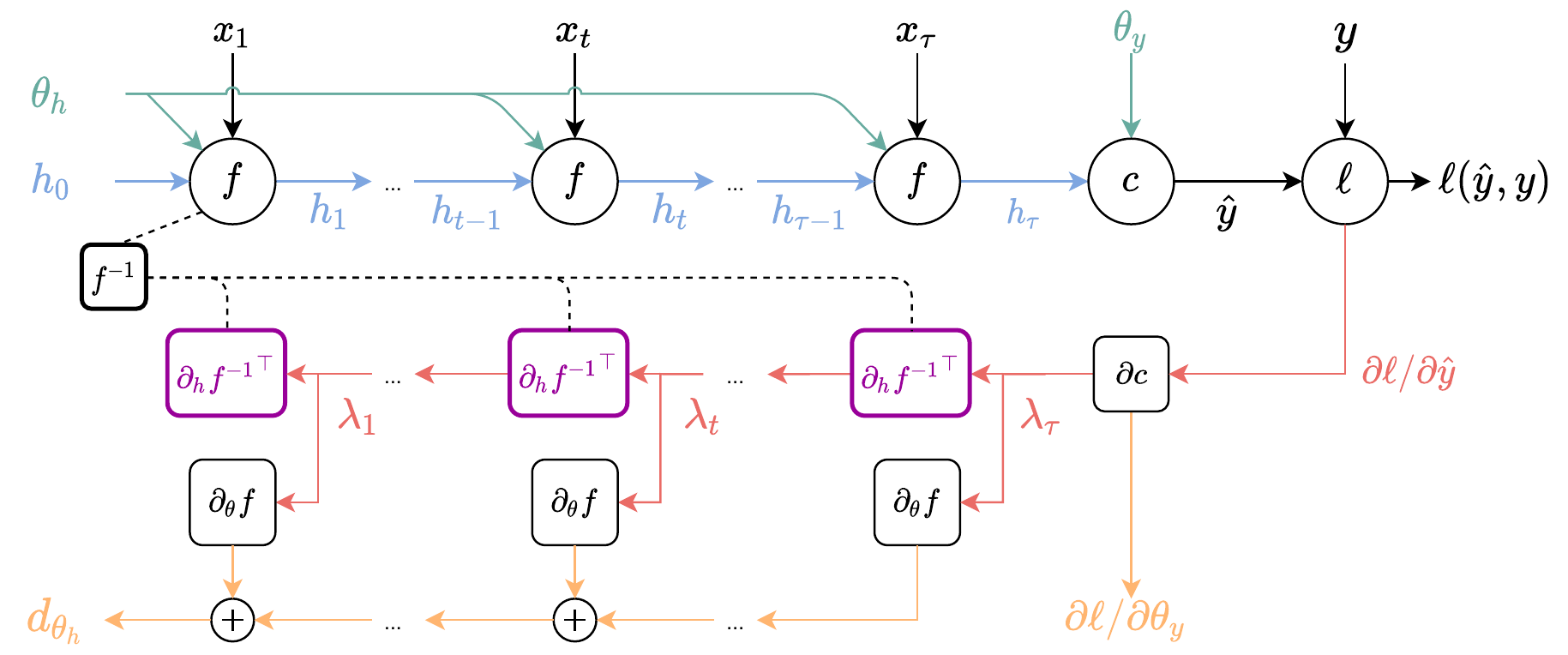}
	\end{center}
	\caption{\small The graph of computations of  target propagation is the same as the one of gradient back-propagation except that $f^{-1}$ needs to be computed and Jacobian of the inverses, $\partial_\hidden \dyn^{-1} ~^\top$  are used instead of gradients $\partial_{\hidden} \dyn$ in the transition operations.\label{fig:target_prop}}
\end{figure*}

\paragraph{Arithmetic complexity}
Clearly, the space complexities of gradient back-propagation (BP) and our implementation of  target propagation (TP) are the same since the Jacobians of the inverse, and the original gradients have the same size.
In terms of time complexity, TP appears at first glance to introduce an important overhead since it  requires the computation of some inverses. However, a close inspection of the formula of the regularized inverse reveals that a matrix inversion needs to be computed only once for all time steps. Therefore the cost of the inversion may be amortized if the length of the sequence is particularly long.  

\vspace{3em}

Formally, the time complexity of the forward-backward pass of gradient back-propagation is essentially driven by matrix-vector products, i.e., 
\begin{align*}
	\mathcal{T}_{\textrm{BP}} & = \sum_{t=1}^{\horizon} \left[
	\underbrace{\mathcal{T}(\dyn_{\param, t}) 
		+ \mathcal{T}\left({\partial_\hidden \dyn_{\param, t}}\right)
		+ \mathcal{T}\left({\partial_{\param_\hidden} \dyn_{\param, t}}\right)}_{\textrm{Forward}} 
	+ \underbrace{\mathcal{T}(\partial_\hidden \dyn_{\param, t}(\hidden_{t-1}))
		+ \mathcal{T}(\partial_\param \dyn_{\param, t}(\hidden_{t-1})) }_{\textrm{Backward}}
	\right] \\
	& \approx \horizon( \diminpt\dimhidden + \dimhidden^2 +   \dimhidden\dimparam) + \horizon(\dimhidden^2 + \dimhidden\dimparam),
\end{align*}
where $\diminpt$ is the dimension of the input $x_t$, $\dimparam$ is the dimension of the parameters $\param_\hidden$, for a function $f$ we denote by $\mathcal{T}(f)$ the time complexity to evaluate $f$ and we consider e.g. $\partial_\param \dyn_{\param, t}(\hidden_{t-1}))$ as the linear function $\lambda \rightarrow \partial_\param \dyn_{\param, t}(\hidden_{t-1}))\lambda$. 

On the other hand, the time complexity of target propagation is
\begin{align*}
	\mathcal{T}_{\textrm{TP}} & =\sum_{t=1}^{\horizon} \left[
	\underbrace{	\mathcal{T}(\dyn_{\param, t})  {+} 	\mathcal{T}(\dyn_{\param, t}^{-1}) 
		{+} \mathcal{T}\left({\partial_{\param_\hidden} \dyn_{\param, t}}\right)  {+} \mathcal{T}(\partial_{\hidden}\dyn_{\param,t}^{-1})}_{\textrm{Forward}}
	{+} \underbrace{\mathcal{T}(\partial_{\hidden}\dyn_{\param,t}^{-1})(\hidden_t)^\top )
		{+} \mathcal{T}(\partial_\param \dyn_{\param, t}(\hidden_{t-1}) )}_{\textrm{Backward}}
	\right] \\
	& \quad + \mathcal{P}(\dyn_{\param, t}^{-1}),
\end{align*}
where $ \mathcal{P}(\dyn_{\param, t}^{-1}) $ is the cost of encoding the inverse, which, in our case, amounts to the cost of encoding
$
\invsimp_\param: z \rightarrow (W_{\hidden\hidden}^\top  W_{\hidden\hidden} + \reg \idm)^{-1} W_{\hidden\hidden}^\top,
$
such that our regularized inverse can be computed as $\dyn_{\param, t}^{-1}(\target_t) = \invsimp_\param(\activ^{-1}(\target_t) - W_{\inpt\hidden}\inpt_t + b_\hidden)$. Encoding $\invsimp$ comes at the cost of inverting one matrix of size $\dimhidden$. Therefore, the time-complexity of target propagation can be estimated as 
\begin{align*}
	\mathcal{T}_{\textrm{TP}} &  \approx \dimhidden^3 +\horizon( \diminpt\dimhidden + \dimhidden^2 +   \dimhidden\dimparam) + \horizon(\dimhidden^2 + \dimhidden\dimparam) \approx \mathcal{T}_{\textrm{BP}} \quad \mbox{if} \ \horizon \geq \dimhidden,
\end{align*}
i.e., for long sequences whose length is larger than the dimension of the hidden states, the cost of TP with regularized inverses is approximately the same as the cost of BP. 
If a parameterized inverse was used rather than a regularized inverse, the cost of encoding the inverse would correspond to the cost of updating the reverse layers by, e.g., a stochastic gradient descent. This update has a cost similar to BP. However, it is unclear whether these updates get us close to the actual inverses.

\paragraph{Bounding the difference between target  propagation and gradient back-propagation}
As the computational graphs of BP and TP are the same, we can  bound the difference between the oracles returned by both methods. First, note that the updates of the parameters of the output functions  are the same since, in TP,  gradients steps of the loss are used to update these parameters. The difference between TP and BP  lies in the updates with respect to the parameters of the transition operations. For BP, the updates are computed by chain rule as 
\[
{\partial_{\param_\hidden} \loss\left(y , \chain_{\theta}( \inpt_{1:\horizon})\right) }
= \sum_{t=1}^{\horizon} 
\partial_{\param_{\hidden}} \dyn_{\param, t} (\hidden_{t-1})
\frac{\partial \hidden_\horizon}{\partial \hidden_t}
{\partial_\hidden \loss(y, \pred_\param(\hidden_\horizon))},
\]
where the term  $\partial \hidden_\horizon /\partial \hidden_t$ decomposes along the time steps as 
${\partial \hidden_\horizon}/{\partial \hidden_t} = \prod_{s=t+1}^{\horizon}  \partial_{\hidden} \dyn_{\param, s}(\hidden_{s-1})$.
The direction computed by TP has the same structure, namely it can be decomposed for $\stepsize_\hidden = 1$  as
\[
\descent_\param = \sum_{t=1}^{\horizon} 
\partial_{\param_{\hidden}} \dyn_{\param, t} (\hidden_{t-1})
\frac{\hat \partial  \hidden_\horizon}{\hat \partial \hidden_t}
{\partial_\hidden \loss(y, \pred_\param(\hidden_\horizon))},
\]
where ${\hat \partial  \hidden_\horizon}/{\hat \partial \hidden_t} =  \prod_{s=t+1}^{\horizon} {\partial_{\hidden}\dyn_{\param, s}^{-1}(\hidden_s) }^\top$. We can then bound the difference between the directions given by BP or TP as, for any matrix norm $\|\cdot\|$ as formally stated in the following lemma.

\begin{restatable}{lemma}{bounddiff}
	The difference between the oracle returned by gradient back-propagation $\partial_{\param_\hidden} \loss\left(y , \chain_{\theta}( \inpt_{1:\horizon})\right)$ and the oracle returned by target propagation can be bounded as 
	\[
	\|\partial_{\param_\hidden} \loss\left(y , \chain_{\theta}( \inpt_{1:\horizon})\right)  - \descent_\param\| 
	\leq c\sup_{t=1, \ldots, \horizon} \| \partial_{\hidden}  \dyn_{\param, t}(\hidden_{t-1}) - {\partial_{\hidden}\dyn_{\param, t}^{-1}(\hidden_t) }^\top \|,
	\]
	where $c =  \sum_{t=1}^\horizon \sum_{s=0}^{t-1} a^sb^{t-1-s}$ with $a = \sup_{t=1, \ldots \horizon} \|  \partial_{\hidden}  \dyn_{\param, t}(\hidden_{t-1})\|, b =  \sup_{t=1, \ldots \horizon} \|  {\partial_{\hidden}\dyn_{\param, t}^{-1}(\hidden_t) }^\top\|$. 
	
	For regularized inverses, we have, denoting $\inter_t = W_{ \inpt\hidden} x_t + W_{\hidden\hidden} \hidden_{t-1} + b_\hidden$, 
	\begin{align*}
		\| \partial_{\hidden}  \dyn_{\param, t}(\hidden_{t-1}) - {\partial_{\hidden}\dyn_{\param, t}^{-1}(\hidden_t) }^\top \|   \leq  
		\|W_{\hidden\hidden}^\top \| \Big( & \|\nabla \activ ( \inter_t) -\nabla \activ(\inter_t)^{-1}\|    + \|\idm - (W_{\hidden\hidden}^\top W_{\hidden\hidden} +\reg \idm)^{-1} \| \|\nabla \activ(\inter_t)^{-1}\| \Big).
	\end{align*}
\end{restatable}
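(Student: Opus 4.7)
The plan is to handle the two inequalities separately. For the first, I observe that both BP and TP oracles (displayed just before the lemma) share the form $\sum_{t=1}^\horizon P_t\, M_t\, v$ with identical prefactor $P_t = \partial_{\param_\hidden}\dyn_{\param,t}(\hidden_{t-1})$ and postfactor $v = \partial_\hidden\loss(y,\pred_\param(\hidden_\horizon))$; only $M_t$ differs, equalling $\prod_{s=t+1}^\horizon A_s$ with $A_s := \partial_\hidden\dyn_{\param,s}(\hidden_{s-1})$ for BP, versus $\prod_{s=t+1}^\horizon B_s$ with $B_s := \partial_\hidden \dyn_{\param,s}^{-1}(\hidden_s)^\top$ for TP. I then apply the standard telescoping identity
\[
\prod_{s=t+1}^\horizon A_s - \prod_{s=t+1}^\horizon B_s \;=\; \sum_{k=t+1}^\horizon \Big(\prod_{s=t+1}^{k-1} A_s\Big)(A_k - B_k)\Big(\prod_{s=k+1}^\horizon B_s\Big),
\]
take the matrix norm, and use submultiplicativity to bound each summand by $a^{k-t-1}b^{\horizon-k}\sup_s\|A_s-B_s\|$. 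Summing over $t$ and reindexing by $m = \horizon-t$ collapses the resulting double sum into the stated constant $c = \sum_{t=1}^\horizon\sum_{s=0}^{t-1} a^s b^{t-1-s}$.

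For the second inequality, I start from the explicit expressions of the regularized inverse and its forward counterpart. Setting $\inter_t = W_{\inpt\hidden}\inpt_t + W_{\hidden\hidden}\hidden_{t-1} + b_\hidden$ and $D_t = \diag(\nabla \activ(\inter_t))$, a direct computation gives $\partial_\hidden \dyn_{\param,t}(\hidden_{t-1}) = D_t W_{\hidden\hidden}$. Differentiating the closed form $\dyn_{\param,t}^{-1}(\targetvar) = (W_{\hidden\hidden}^\top W_{\hidden\hidden} + \reg\id)^{-1} W_{\hidden\hidden}^\top (\activ^{-1}(\pi(\targetvar)) - W_{\inpt\hidden}\inpt_t - b_\hidden)$ at $\targetvar = \hidden_t$ — where $\hidden_t = \activ(\inter_t)$ lies in the image of $\activ$, so $\pi$ is locally the identity and $(\activ^{-1})'(\hidden_t)$ equals the entrywise reciprocal of $\activ'(\inter_t)$ — yields $\partial_\hidden \dyn_{\param,t}^{-1}(\hidden_t)^\top = D_t^{-1} W_{\hidden\hidden}(W_{\hidden\hidden}^\top W_{\hidden\hidden} + \reg\id)^{-1}$. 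I then split the difference additively as
\[
D_t W_{\hidden\hidden} - D_t^{-1} W_{\hidden\hidden}(W_{\hidden\hidden}^\top W_{\hidden\hidden} + \reg\id)^{-1} = (D_t - D_t^{-1})W_{\hidden\hidden} + D_t^{-1} W_{\hidden\hidden}\bigl(\id - (W_{\hidden\hidden}^\top W_{\hidden\hidden} + \reg\id)^{-1}\bigr),
\]
and apply triangle plus submultiplicative inequalities, factoring out $\|W_{\hidden\hidden}\| = \|W_{\hidden\hidden}^\top\|$ and identifying $\|D_t - D_t^{-1}\|$ with $\|\nabla\activ(\inter_t) - \nabla\activ(\inter_t)^{-1}\|$ and $\|D_t^{-1}\|$ with $\|\nabla\activ(\inter_t)^{-1}\|$, which recovers the claimed bound.

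The main obstacle is the careful chain-rule computation for the inverse evaluated at the already-forwarded state $\hidden_t$: it is essential that $\hidden_t = \activ(\inter_t)$ makes $(\activ^{-1})'(\hidden_t)$ equal to the entrywise reciprocal of $\activ'(\inter_t)$, which is exactly what produces the $D_t^{-1}$ factor and hence the $\nabla\activ(\inter_t)^{-1}$ terms in the final bound. The telescoping and the additive splitting are otherwise routine; the only bookkeeping concern is to keep the exponents of $a$ and $b$ consistent so that, after reindexing the outer sum over $t$, they repackage cleanly into the double-sum form of $c$.
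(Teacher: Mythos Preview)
Your argument is correct and mirrors the paper's proof: the first bound is exactly the paper's Lemma~C.2 (the product-difference/telescoping estimate you wrote out), and the second bound follows from the explicit Jacobians together with $\nabla\activ^{-1}(\hidden_t)=\nabla\activ(\inter_t)^{-1}$ and the same additive split. The only cosmetic point is that, like the paper, you silently drop the common factors $\|\partial_{\param_\hidden}\dyn_{\param,t}(\hidden_{t-1})\|$ and $\|\partial_\hidden\loss(y,\pred_\param(\hidden_\horizon))\|$ when passing from $\sum_t P_t(M_t^{\mathrm{BP}}-M_t^{\mathrm{TP}})v$ to the stated constant $c$; this matches the paper's level of detail.
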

For the two oracles to be close, we then need the preactivation $\inter_t = W_{ \inpt\hidden} x_t + W_{\hidden\hidden} \hidden_{t-1} + b_\hidden$ to lie in the region of the activation function that is close to being linear s.t. $\nabla \activ ( \inter_t) \approx \idm $. We also need $(W_{\hidden\hidden}^\top W_{\hidden\hidden} +\reg \idm)^{-1}$ to be close to the identity which can be the case if, e.g., $\reg=0$ and the weight matrices $W_\hidden$ were orthonormal. By initializing the weight matrices as orthonormal matrices, the differences between the two oracles can be closer. However, in the long term, target propagation appears to give better oracles, as shown in the experiments below. 

\originalparagraph{Target propagation as a Gauss-Newton method?}

Recently target propagation has been interpreted as an approximate Gauss-Newton method, by considering that the difference target propagation formula approximates the linearization of the inverse, which itself is a priori equal to the inverse of the gradients~\citep{bengio2020deriving, meulemans2020theoretical, meulemans2021credit}. 
Namely, provided that $\dyn_{\param, t}^{-1}(\dyn_{\param, t}(\hidden_{t-1})) \approx \hidden_{t-1}$ such that $\partial_\hidden \dyn_{\param, t}(\hidden_{t-1}) \partial_\hidden\dyn_{\param, t}^{-1}(\hidden_t) \approx \idm$, we have
\[
{\partial_{\hidden}\dyn_{\param, t}^{-1}(\hidden_t) }  \approx \left({\partial_\hidden \dyn_{\param, t}(\hidden_{t-1})}\right)^{-1}.
\]
By composing the inverses of the gradients, we get an update similar to the one of Gauss-Newton (GN) method. Namely, recall that if $n$ invertible functions $f_1, \ldots, f_n$ were composed to solve a least square problem of the form $\|f_n \circ \ldots \circ f_1(x) - y\|_2^2$, a Gauss-Newton update would take the form $x^{(k+1)} = x^{(k)} - \partial_{x_0} f_1(x_0)^{-\top} \ldots\partial_{x_{n-1}} f(x_{n-1})^{-\top}(x_n - y)$, where $x_t$ is defined iteratively as $x_0 = x^{(k)}$, $x_{t+1} = f_t(x_t)$. In other words, GN and TP share the idea of composing the inverse of gradients. However, numerous differences remain as detailed in Appendix~\ref{app:proofs}.
Note that even if TP was approximating GN, it is unclear whether GN updates are adapted to stochastic problems. In any case, by using an analytical formula for the inverse, we can test this interpretation by using non-regularized inverses, which would amount to directly use the inverses as in a GN method. If the success of TP could be explained by its interpretation as a GN method, we should observe efficient training curves when no regularization is added.

\section{Experiments}\label{sec:exp}
\begin{figure}
	\begin{center}
		\includegraphics[width=0.9\linewidth]{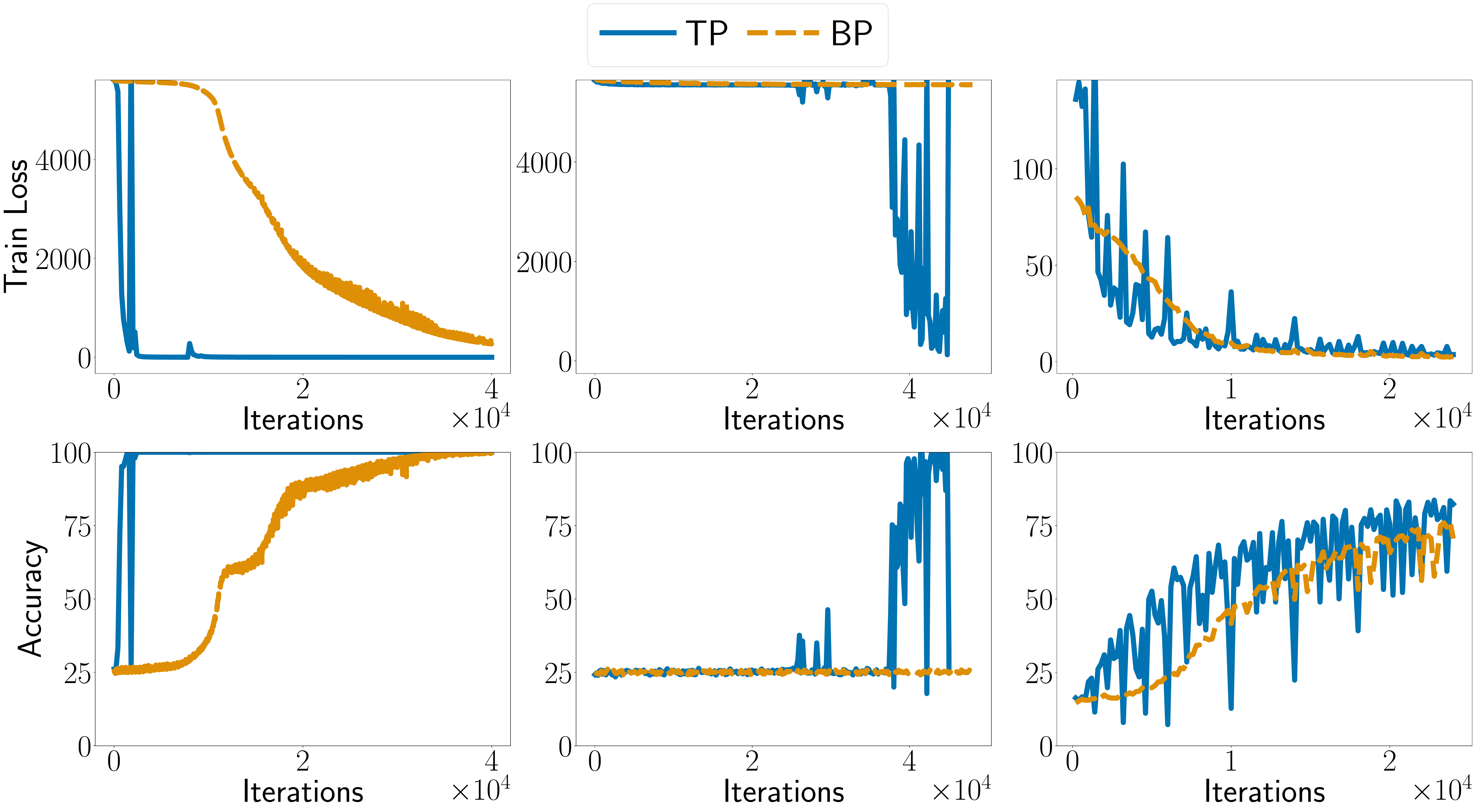}
	\end{center}
	\caption{\small Temporal order problem $T=60$, Temporal Problem $T=120$, Adding problem $T=30$. \label{fig:synth}}
\end{figure}

In the following, we compare  our simple  target propagation approach, which we shall refer to as \textbf{\TPP}, to gradient Back-Propagation referred to as  \textbf{BP}.  We follow the experimental benchmark of~\citet{manchev2020target} to which we add results on RNNs on CIFAR and GRUs on FashionMNIST. Additional experimental,  details on the initialization and the hyper-parameter selection can be found  in Appendix~\ref{app:exp_details}.

\paragraph{Data}
We consider two synthetic datasets generated to present training difficulties for RNNs and several real datasets consisting of scanning images pixel by pixel to classify them~\citep{hochreiter1997long, le2015simple, manchev2020target}.

\textit{Temporal order problem.} A sequence of length $T$ is generated using a set of randomly chosen symbols $\{a, b, c, d\}$. Two additional symbols $X$ and $Y$ are added at positions $t_1 \in [T/10, 2T/10]$ and $t_2 \in [4T/10, 5T/10]$. The network must predict the correct order of appearance of $X$ and $Y$ out of four possible choices $\{XX,XY, YX, YY\}$. 

\textit{Adding problem.} The input consists of two sequences: one is made of randomly chosen numbers from $[0,1]$, and the other one is a binary sequence full of zeros except at positions $t_1\in [1, T/10]$ and $t_2 \in [T/10, T/2]$. The second position acts as a marker for the time steps $t_1$ and $t_2$.  The goal of the network is to output the mean of the two random numbers of the first sequence $(X_{t_1} + X_{t_2})/2$.

\textit{Image classification pixel by pixel.} The inputs are images of (i) grayscale handwritten digits given in the database MNIST~\citep{lecun2010mnist},  (ii) colored objects from the database CIFAR10~\citep{krizhevsky2009learning} or (iii) grayscale images of clothes from the database FashionMNIST~\citep{xiao2017fashion}. The images are scanned pixel by pixel and channel by channel for CIFAR10, and fed to a sequential network such as a simple RNN or a GRU network~\citep{cho2014learning}. The inputs are then sequences of $28\times 28 = 784$ pixels for MNIST or FashionMNIST and $32\times 32\times 3= 3072$ pixels for CIFAR with a very long-range dependency problem. We also consider permuting the images of MNIST by a fixed permutation before feeding them into the network, which gives potentially longer dependencies in the sequential data.
\begin{figure}[t]
	\begin{center}
		\includegraphics[width=0.9\linewidth]{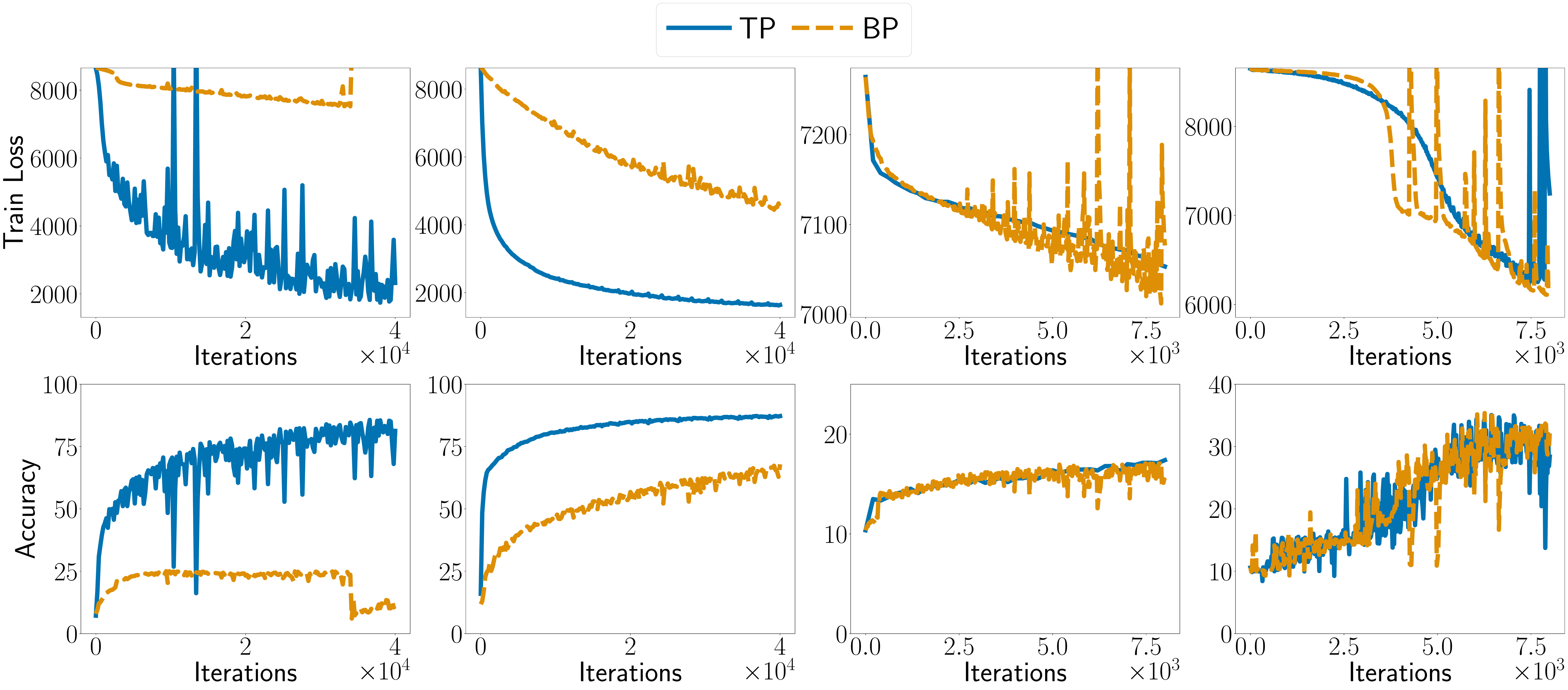}
	\end{center}
	\caption{\small Image classification pixel by pixel. From left to right: MNIST,  MNIST with permuted images, CIFAR10, FashionMNIST with GRU. \label{fig:exp_mnist}}
\end{figure}

\paragraph{Model}
In both synthetic settings, we consider randomly generated mini-batches of size 20, a simple RNN with hidden states of dimension 100, and hyperbolic tangent  activation. For the temporal order problem, the last layer uses a soft-max function on top of a linear operation, and the loss is the cross-entropy.  For the adding problem, the last layer is  linear, the loss is the mean-squared error, and a sample is considered to be accurately  predicted if the mean squared error is less than 0.04 as done by~\citep{manchev2020target}. 

For the classification of images with sequential networks, we consider  mini-batches of size 16 and a cross-entropy loss. For  MNIST and CIFAR, we consider a simple RNN with hidden states of dimension 100, hyperbolic tangent activation, and a softmax output. For FashionMNIST, we consider a GRU network and adapted our implementation of target propagation to that case while using hidden states of dimension 100 and a softmax output.

\paragraph{Target propagation can tackle long sequences better than gradient back-propagation}
In Fig.~\ref{fig:synth}, we observe that \TP  performs better than BP on the temporal ordering problem: it is able to reach 100\% accuracy in fewer iterations than BP for sequences of length 60 and, for sequences of length 120, it is still able to reach 100\% accuracy in fewer than 40 000 iterations while BP is not. On the other hand, for the adding problem, \TP performs less well than BP. The contrast in performance between the two synthetic tasks was also observed by~\citep{manchev2020target} using difference target propagation with parameterized inverses. The main difference between these tasks is the different nature of the outputs, which are binary for the temporal problem and continuous for the adding problem.
\\

In Fig.~\ref{fig:exp_mnist}, we observe that \TP  generally performs better than BP for image classification tasks. For the MNIST dataset, it reaches around 74\% accuracy after $4\cdot 10^4$ iterations. This phenomenon is also observed with permuted images, where the optimization appears smoother, and \TP obtains around 86\% accuracy after $4\cdot 10^4$ iterations and is still faster than  BP. On the CIFAR dataset, no algorithms appear to reach a significant accuracy, though \TP is still faster. On the FashionMNIST dataset, where a GRU network is used, our implementation of \TP performs on par with BP, which shows that our approach can be generalized to more complex networks than a simple RNN.

\paragraph{Target propagation requires a non-zero regularization term}
As mentioned in Sec.~\ref{sec:graph}, by using an analytical formula to compute the inverse of the layers, we can question the interpretation of TP as a Gauss-Newton method, which would amount to TP without regularization.
To understand the effect of the regularization term, we computed the area under the training loss curve of \TP for 400 iterations on a $\log_{\textrm{10}}$ grid of varying step-sizes $\stepsize_\param$ and regularizations $r$ for a fixed $\stepsize_\hidden=10^{-3}$.  The results are presented in Fig.~\ref{fig:heatmap}, where the smaller the area, the brighter the point and the absence of dots in the grid mean that the algorithm diverged. 
Fig.~\ref{fig:heatmap} shows that without regularization we were not able to obtain convergence of the algorithm.
Simply using the gradients of the inverse as in a Gauss-Newton method may not directly work for RNNs. Additional modifications of the method could  be added to make target propagation closer to Gauss-Newton, such as inverting the layers with respect to their parameters as proposed by \citet{bengio2020deriving}. For now, the regularization appears to successfully  handle the rationale of target propagation.

\paragraph{Target propagation is adapted for long sequences}
In Fig.~\ref{fig:regimes}, we compare the performance of BP and \TP in terms of accuracy after 400 iterations on the MNIST problem for various widths determined by the size of the hidden states and various lengths determined by the size of the inputs (i.e., we feed the RNN with $k$ pixels at a time, which gives a length $784/k$). Fig~\ref{fig:regimes} shows that TP is generally appropriate for long sequences, while BP remains more efficient for short sequences. TP can then be seen as an interesting alternative for dynamical problems which involve many discretization steps as in RNNs  and related architectures.

\begin{figure}[t]
	\centering
	\renewcommand{\thefigure}{}
	\hspace{-3em}
	\begin{subfigure}[t]{0.5\textwidth}
		\centering
		\renewcommand{\thesubfigure}{6a}
		\includegraphics[height=0.6\linewidth]{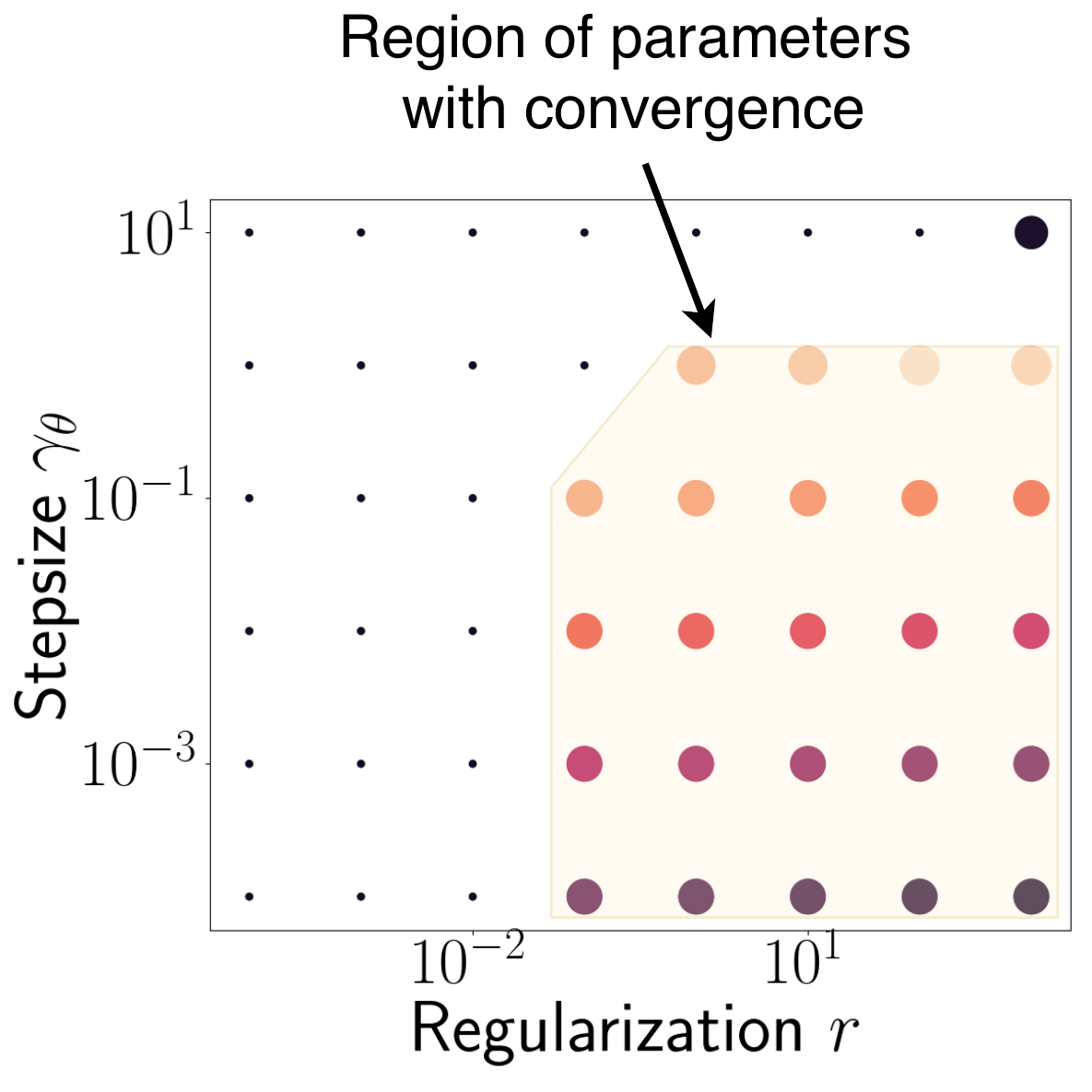}
		\caption{Conv.
			w.r.t. stepsize \& regularization\label{fig:heatmap}}
	\end{subfigure}~
	\begin{subfigure}[t]{0.4\textwidth}
		\centering
		\renewcommand{\thesubfigure}{6b}
		\includegraphics[height=0.65\linewidth]{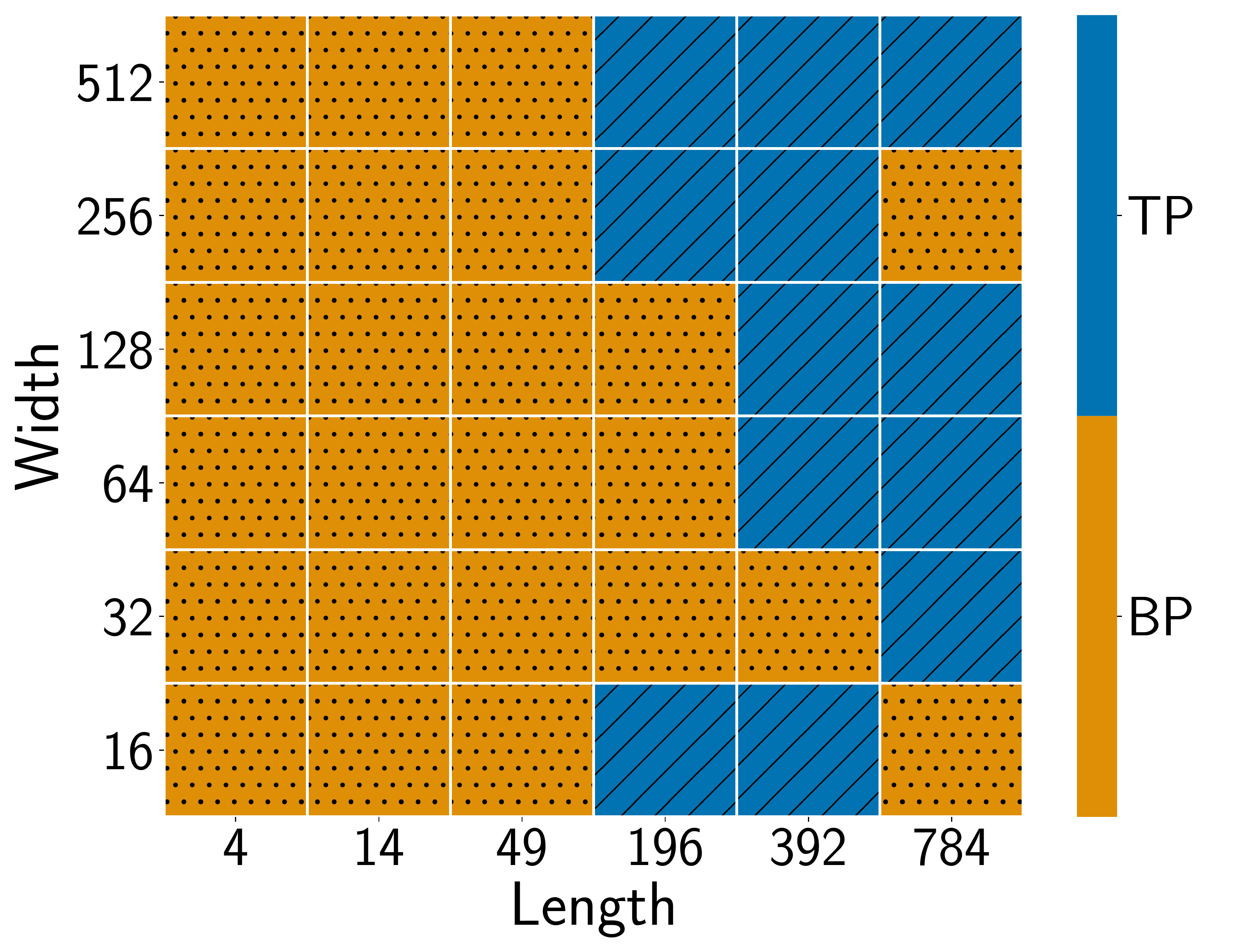}
		\caption{Perf. vs width \& length.\label{fig:regimes}}
	\end{subfigure}
\end{figure}

\section*{Conclusion}
We proposed a simple target propagation approach grounded in two important computational components, regularized inversion, and linearized propagation. The proposed approach also sheds light on previous insights and successful rules for target propagation. The code is available to facilitate the reproduction of the results. We have used target propagation within a stochastic gradient outer loop to train neural networks for a fair comparison to stochastic gradient using gradient backpropagation. Developing adaptive stochastic gradient algorithms in the spirit of Adam that lead to boosts in performance when using target propagation instead of gradient backpropagation is an interesting avenue for future work. 
Continuous counterparts of target propagation in a neural ODE spirit is also an interesting avenue for future work.

\paragraph{Acknowledgments}
This work was supported by NSF CCF-1740551, NSF DMS-1839371, the CIFAR program ``Learning in Machines and Brains'', and faculty research awards. We thank Nikolay Manchev for  all the details he provided on his code. 
\bibliography{target_prop_refs}
\bibliographystyle{plainnat}

\clearpage
\appendix
\section*{Appendix Plan}
The Appendix is organized as follows.
\begin{enumerate}[nosep]
	\item Sec.~\ref{app:rnn} recalls how gradient back-propagation works for RNNs.
	\item Sec.~\ref{app:target_prop_algo} details the implementations of target propagation.
	\item Sec.~\ref{app:proofs} details the differences between TP and gradient back-propagation or Gauss-Newton optimization.
	\item Sec.~\ref{app:exp_details} details the parameters used in our experiments and presents additional experiments.
\end{enumerate}

\section{Gradient back-propagation in Recurrent Neural Networks}\label{app:rnn}

Given differentiable activation functions $\activ$, the training of recurrent neural networks is amenable to optimization by gradient descent. The gradients can be computed by gradient back-propagation implemented in modern differentiable programming software~\citep{rumelhart1985learning,werbos1994roots, paszke2017automatic}. The gradient back-propagation algorithm is illustrated in Fig.~\ref{fig:grad_prop}. Formally, the gradients are computed by the chain rule such that, for a sample $(y, \inpt_{1:\horizon})$ and $\param_\hidden = (W_{\hidden\hidden}, W_{\inpt\hidden}, b_\hidden)$, 
\[
\frac{\partial \loss\left(y , \chain_{\theta}( \inpt_{1:\horizon})\right) }{\partial \param_\hidden} 
= \sum_{t=1}^{\horizon} 
\frac{\partial \hidden_t}{\partial \param_\hidden}
\frac{\partial \hidden_\horizon}{\partial \hidden_t}
\frac{\partial \hat  \outpt}{\partial \hidden_\horizon}
\frac{\partial \loss }{\partial \hat \outpt}.
\]
The term $\partial \hidden_\horizon /\partial \hidden_t$ decomposes along the time steps as 
\[
{\frac{\partial \hidden_\horizon}{\partial \hidden_t}} = \prod_{s=t+1}^{\horizon} \frac{\partial \hidden_s}{\partial \hidden_{s-1}} .
\] 
As $\horizon$ grows, the norm of the term $\partial \hidden_\horizon /\partial \hidden_t$ can then either increase to infinity (\emph{exploding gradients}) or  exponentially decrease  to  0 (\emph{vanishing gradients}). This phenomenon may prevent the RNN from learning from dependencies between temporally distant events~\citep{hochreiter1998vanishing}. Several solutions were proposed to tackle this issue, including changing the network architecture~\citep{hochreiter1997long}, Hessian-free optimization~\citep{sutskever2011generating}, gradient clipping and regularization~\citep{pascanu2012understanding}, or orthonormal parametrizations~\citep{arjovsky2016unitary, helfrich2018orthogonal, lezcano2019cheap}. We consider here propagating targets instead of gradients as first presented by LeCun and co-workers~\citep{lecun1986learning, lecun1989gemini} and recently revisited by Bengio and co-workers~\citep{bengio2014auto, lee2015difference}.

\section{Detailed Implementation}\label{app:target_prop_algo}
\subsection{Target Propagation for RNNs}
As detailed in Sec.~\ref{sec:graph},  target propagation with linearized regularized inverses amounts to move along a descent direction computed by a forward-backward algorithm akin to gradient propagation. The iterations of linearized target propagation are then summarized in Algo.~\ref{algo:target_prop_iter}. The iterations of Algo.~\ref{algo:target_prop_iter} make calls to any algorithm providing a descent direction which is  computed by Algo.~\ref{algo:linearized_target_prop}.

In the implementation of the regularized inverses, since the inverse of activation functions such as the sigmoid or the tangent hyperbolic is numerically unstable, we consider projecting on a subset of $a(\reals^\dimhidden)$. For the hyperbolic tangent, we clip the target to $[-1+\varepsilon, 1-\varepsilon]$ for $\varepsilon=10^{-3}$. Concretely, for an hyperbolic tangent activation function, the projection is then 
$
\pi(x) = (\min(\max(x_i, -1+\varepsilon), 1-\varepsilon))_{i=1}^d 
$ for $x \in \reals^d$.
To read Algo.~\ref{algo:linearized_target_prop}, we recall our notations  for $\param =  (W_{\hidden\hidden}, W_{\inpt\hidden}, b_\hidden, W_{\hidden\outpt}, b_\outpt)$:
\begin{align}
	\pred_\param(\hidden_\horizon) & = \alpha(W_{\hidden \outpt} \hidden_\horizon +  b_\outpt), \label{eq:outpt}\\
	\dyn_{\param, t}(\hidden_{t-1}) & = \activ(W_{ \inpt\hidden} x_t + W_{\hidden\hidden} \hidden_{t-1} + b_\hidden), \label{eq:dyn}\\
	\dyn_{\param, t}^{-1}(\target_t) & = (W_{\hidden\hidden}^\top W_{\hidden\hidden} +\reg \idm)^{-1}W_{\hidden\hidden}^\top( \activ^{-1}(\pi(\target_t)) -W_{\inpt\hidden} \inpt_t  - b_\hidden). \label{eq:inv}
\end{align}
Note that Algo.~\ref{algo:linearized_target_prop} can also be used for mini-batches of sequence-output pairs since all operations are either element-wise or linear with respect to the sample of  sequence-output pair. 

\begin{figure}
	\begin{center}
		\includegraphics[width=0.9\linewidth]{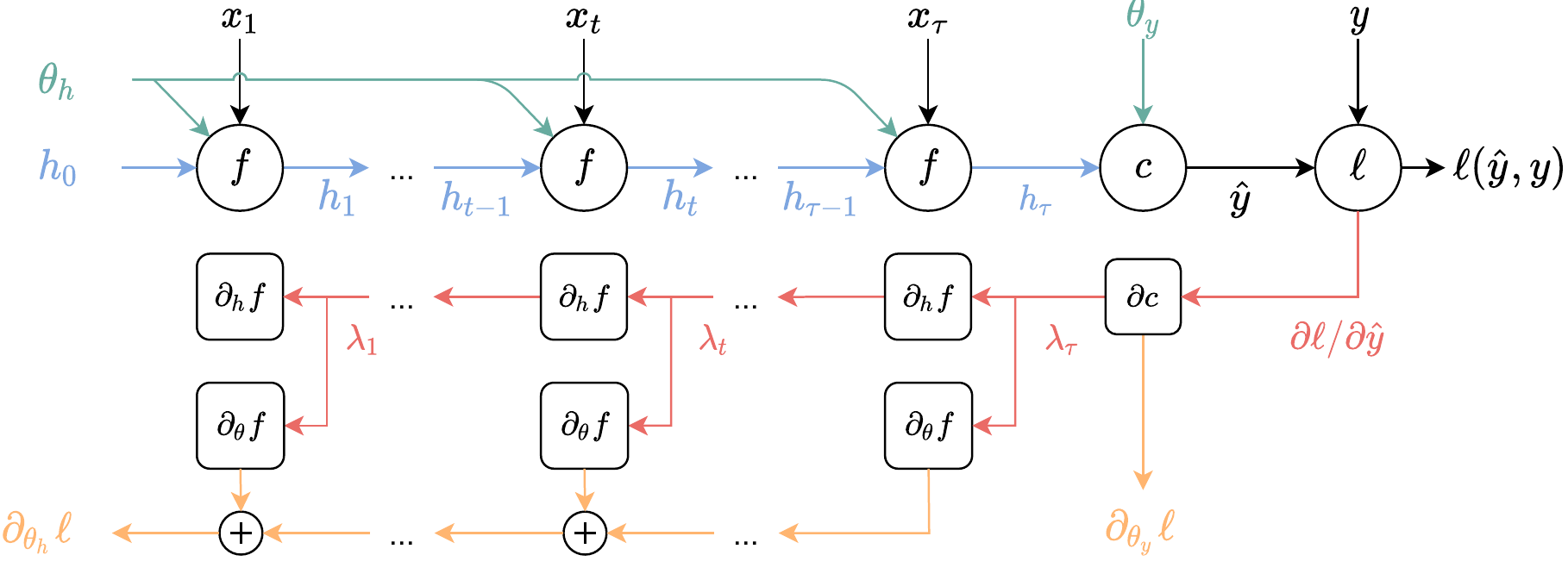}
	\end{center}
	\caption{\small Gradient back-propagation for RNN.\label{fig:grad_prop}}
\end{figure}

\begin{algorithm}[t]\caption{Stochastic learning with target propagation\label{algo:target_prop_iter}}
	\begin{algorithmic}[1]
		\State {\bf Inputs:} Initial  parameters $\param^{(0)} = (W_{\hidden\hidden}, W_{\inpt\hidden}, b_\hidden, W_{\hidden\outpt}, b_\outpt)$ of an RNN  defined by Eq.~\eqref{eq:outpt} and~\eqref{eq:dyn}, stepsize $\stepsize_{\param}$, total number of iterations $K$
		\For{$k=1\ldots K$}
		\State Draw a sample or a mini-batch of  sequences-output pairs
		$(\inpt_{1:\horizon}, y)$.
		\State Compute 
		\[
		\descent_\param = (\descent_{\param_\hidden}, \descent_{\param_\outpt} )= \textrm{TP}(\param^{(k-1)}, \inpt_{1:\horizon}, y),
		\]
		\hspace{13pt}	where $\textrm{TP}$ is Algo.~\ref{algo:linearized_target_prop}
		\State Update the parameters as
		$
		\param^{(k)} = \param^{(k-1)} + \stepsize_{\param} \descent_\param.
		$
		\EndFor
	\end{algorithmic}
\end{algorithm}

\begin{algorithm}\caption{Proposed target propagation algorithm \label{algo:linearized_target_prop}}
	\begin{algorithmic}[1]
		\State{\bf Parameters:}  $\pi$ a projection onto a susbet of $\activ(\reals^\dimhidden)$, stepsize $\stepsize_\hidden$, regularization $\reg$.
		\State {\bf Inputs:} Current parameters $\param= (\param_\hidden, \param_\outpt)$ with $\param_\hidden = (W_{\hidden\hidden}, W_{\inpt\hidden}, b_\hidden), \param_\outpt = (W_{\hidden\outpt}, b_\outpt)$ of the RNN, sample of sequences-output pairs
		$(\inpt_{1:\horizon}, y)$.
		\State	\underline{Forward Pass:}
		\State Compute and store $\darkviolet{V} = (W_{hh}^\top W_{hh} + \reg \idm)^{-1}W_{\hidden\hidden}^\top$  giving access to 
		$
		\dyn_{\param, t}^{-1}(\target_t)   
		$
		defined in Eq.~\eqref{eq:inv}.
		\State Initialize $\mediumblue{\hidden_0}= 0$.
		\For{$t=1,\ldots, \horizon$}
		\State Compute and store
		$
		\mediumblue{\hidden_t} = \dyn_{\param, t}(\mediumblue{\hidden_{t-1}}),\quad  \darkviolet{{\partial_{\param_\hidden} \dyn_{\param, t}(\hidden_{t-1})}}, \quad \violet{\partial_{\hidden_t} \dyn_{\param, t}^{-1}(\hidden_t) }.
		$
		\EndFor
		\State Compute and store
		$
		\loss(\outpt, \pred_\param(\hidden_\horizon)), \quad \darkviolet{ {\partial_{\partial \hidden_\horizon} \loss(y, \pred_{\param}(\hidden_\horizon))}}, \quad \darkviolet{{\partial_{\param_\outpt} \loss(\outpt, \pred_\param(\hidden_\horizon))}}
		$.

		\State	\underline{Backward Pass:}
		
		\State Define 
		$
		\mediumred{\disp_\horizon} = 
		- \stepsize_\hidden 
		\darkviolet{	{\partial_{\hidden_\horizon} \loss(y, \pred_{\param}(\hidden_\horizon))}}, \quad 
		\orange{\descent_{\param_\outpt}} = 
		-\darkviolet{{\partial_{\param_\outpt} \loss(\outpt, \pred_\param(\hidden_\horizon))}}.
		$
		\For{$t= \horizon, \ldots, 1$}
		\State {Compute
			$
			\mediumred{\disp_{t-1}}=   \violet{\partial_{\hidden_t} \dyn_{\param, t}^{-1}(\hidden_t) }^\top \mediumred{\disp_t}.
			$
		}
		\EndFor
		
		\State {\bf Outputs:} Descent directions for $\param_\hidden$, $\param_\outpt$:
		\[
		\orange{\descent_{\param_\hidden}} = \sum_{t=1}^{\horizon}\darkviolet{{ \partial_{\param_\hidden} \dyn_{\param, t}(\hidden_{t-1})} \mediumred{\disp_t}},
		\qquad  \orange{\descent_{\param_\outpt}} = -\darkviolet{{\partial_{\param_\outpt} \loss(\outpt, \pred_\param(\hidden_\horizon))}}.
		\]
	\end{algorithmic}
\end{algorithm}

\subsection{Target-propagation for GRU networks}
\subsubsection{Formulation}
Starting from $h_0 = 0$, given an input sequence $x_1,\ldots, x_\tau$, the GRU network (as implemented in Pytorch\footnote{Compared to {\small \url{ https://pytorch.org/docs/stable/generated/torch.nn.GRU.html}}, we used a single variable $b_m=b_{im}+ b_{hm}$, same for $b_z$.}~\citep{paszke2017automatic}), iterates for $t=1, \ldots, \tau$, 
\begin{align}
	m_t &=  f_{m,t}(h_{t-1}) := \sigma(W_{im} x_t + W_{hm} h_{t-1} + b_m) \\
	z_t & = f_{z, t}(h_{t-1}) := \sigma(W_{iz} x_t + W_{hz} h_{t-1} + b_z) \\
	n_t &= f_{n, t}(h_{t-1},m_t) := \tanh(W_{in} x_t +  b_{in} + m_t \odot  (W_{hn} h_{t-1} + b_{hn})) \\
	h_t & = f_{h, t }(h_{t-1}, z_t, n_t):=  (1 - z_t) \odot h_{t-1} + z_t \odot n_t,
\end{align}
where $\odot$ is the Hadamard product, $\sigma$ is a sigmoid. 
In the following, we will denote simply $\theta =(\theta_m, \theta_z, \theta_n)$ the parameters of the network with
\begin{align*}
	\theta_m = (W_{im}, W_{hm}, b_m), \qquad
	\theta_z = (W_{iz}, W_{hz}, b_z),\qquad
	\theta_n  = (W_{in} , b_{in}. W_{hn}, b_{hn}) .
\end{align*}
The output of the network is e.g. a soft-max operation on the hidden state computed at the last step (if applied to an image scanned pixel by pixel for example). See the main paper for the expression of the output in that case.

\subsubsection{Modifying the chain rule}
The underlying idea of our implementation of target propagation in a differentiable programming framework is to mix classical gradients and Jacobians of the inverse of the functions. Denote  for a given output loss $\mathcal{L}$ computed on a given mini-batch with the current parameters $\theta$, 
$
{\hat \partial \mathcal{L}}/{\hat \partial h_t}
$ the direction back-propagated by our implementation of target propagation until the step $h_t$. The directions for the parameters of the network can be output as 
\[
\frac{\hat \partial \mathcal{L}}{\hat  \partial \theta} = \sum_{t=1}^{\tau}\frac{\partial h_t}{\partial  \theta} \frac{\hat \partial \mathcal{L}}{\hat \partial h_t} .
\]

The main task is to define $
{\hat \partial \mathcal{L}}/{\hat \partial h_{t-1}}
$ given $
{\hat \partial \mathcal{L}}/{\hat \partial h_t}
$ and appropriate regularized inverses. For that, we start with the chain rule for  $
{ \partial h_t}/{ \partial h_{t-1}}
$ and we will replace some of the gradients by Jacobians of regularized inverses at some places. 

\paragraph{Classical chain rule}
We have
\begin{align}\label{eq:partial_hidden_1}
	\frac{\partial h_t}{\partial h_{t-1}} & = \left(- \frac{\partial z_t}{\partial h_{t-1}}\right) \diag(h_{t-1})  + \idm  \diag(1-z_t) +\frac{\partial z_t}{\partial h_{t-1}}  \diag (n_t)  + \frac{\partial n_t}{\partial h_{t-1}} \diag(z_t)  \\
	&  = \diag(1-z_t) +  \frac{\partial z_t}{\partial h_{t-1}}\left(\diag(n_t)-\diag(h_{t-1})\right)  +  \frac{\partial n_t}{\partial h_{t-1}} \diag(z_t).
\end{align}
Now for ${\partial n_t}/{\partial h_{t-1}}$, we further decompose the function $f_{n, t}(h_{t-1})$ as 
\[
f_{n, t}(h_{t-1}) = g_t(m_t \odot a_t ),
\]
with $g_t(u) = \tanh(W_{in}x_t + b_{in} + u)$ and $a_t  =  \ell(h_{t-1}):= W_{hn} h_{t-1} + b_{hn}$. We then have, denoting $u = m_t \odot a_t $ 
\begin{align}\label{eq:partial_gidden_2}
	\frac{\partial n_t}{\partial h_{t-1}} = \left(\frac{\partial m_t}{\partial h_{t-1}} \diag (a_t) + \frac{\partial a_t }{\partial h_{t-1}} \diag(m_t)\right) \nabla g_t(u),
\end{align}
with $\nabla g_t(u) = \diag(\tanh'(W_{in}x_t + b_{in} + u))$.

\paragraph{Inverses}
Now, the variables $z_t, m_t$ and $a_t$ are functions of $h_t$ that incorporate a linear operation and that can be inverted.  Namely, we can define the following regularized inverses
\begin{align*}
	f^{-1}_{m, t}(v_t) & = (W_{hm}^\top W_{hm} + r \idm)^{-1} W_{hm}^\top(\sigma^{-1}(v_t) - W_{ir} x_t - b_m) \\
	f^{-1}_{z, t}(v_t) & = (W_{hz}^\top W_{hz} + r \idm)^{-1} W_{hz}^\top(\sigma^{-1}(v_t) - W_{iz} x_t - b_z) \\
	\ell^{-1}(v_t) & = (W_{hn}^\top W_{hn} + r\idm)^{-1} W_{hn}^\top(v_t - b_{hn}).
\end{align*}
We can then do the following substitutions in Eq.\eqref{eq:partial_hidden_1} and \eqref{eq:partial_gidden_2}
\begin{align*}
	\frac{\partial m_t}{\partial h_{t-1}} & \leftarrow  \frac{\hat \partial m_t}{\hat \partial h_{t-1}}= \nabla f^{-1}_{m, t}(m_t)^\top \\
	\frac{\partial z_t}{\partial h_{t-1}} & \leftarrow  \frac{\hat \partial z_t}{\hat \partial h_{t-1}}= \nabla f^{-1}_{z, t}(z_t)^\top\\
	\frac{\partial a_t}{\partial h_{t-1}} & \leftarrow  \frac{\hat \partial a_t}{\hat \partial h_{t-1}}= \nabla \ell^{-1}(a_t)^\top
\end{align*}
to define the quantity back-propagated by target propagation.

Note that by taking the gradient of the inverse we can ignore the biases and the inputs. Namely, we have for example
\[
\nabla f^{-1}_{m, t}(m_t) = \diag((\sigma^{-1})'(m_t)) W_{hm}(W_{hm}^\top W_{hm} + r \idm)^{-1},
\]
hence 
\[
\nabla f^{-1}_{m, t}(m_t)^{\top} = (W_{hm}^\top W_{hm} + r \idm)^{-1}W_{hm}^\top   \diag((\sigma^{-1})'(m_t))
\]
The expression for $\nabla f^{-1}_{z, t}(z_t)$ is identical. Since $\ell$ is affine, we have simply
\[
\nabla \ell^{-1}(a_t)^\top = (W_{hn}^\top W_{hn} + r\idm)^{-1} W_{hn}^\top.s
\] 

\paragraph{Summary}
Combined together, we get, denoting $d_t = \frac{\hat \partial \mathcal{L}}{\hat \partial h_{t}}$, 
\begin{align*}
	\frac{\hat \partial \mathcal{L}}{\hat \partial h_{t-1}} & 
	= (1-z_t)\odot d_t 
	+  \nabla f^{-1}_{z, t}(z_t)^\top 
	((n_t - h_{t-1})\odot d_t) \\
	& \quad + \nabla f^{-1}_{m, t}(m_t)^\top 
	( a_t \odot \tanh'(W_{in}x_t + b_{in} + u) \odot z_t \odot d_t)\\
	&\quad + \nabla \ell^{-1}(a_t)^\top 
	(m_t \odot \tanh'(W_{in}x_t + b_{in} + u) \odot z_t \odot d_t)\\
	& = (1-z_t)\odot d_t 
	+  (W_{hz}^\top W_{hz} + r \idm)^{-1}W_{hz}^\top 
	\left((\sigma^{-1})'(z_t) \odot (n_t - h_{t-1})\odot d_t\right) \\
	& \quad  + (W_{hm}^\top W_{hm} + r \idm)^{-1}W_{hm}^\top
	\left(( \sigma^{-1})'(m_t) \odot a_t \odot \tanh'(W_{in}x_t + b_{in} + u) \odot z_t \odot d_t\right) \\
	&\quad +(W_{hn}^\top W_{hn} + r\idm)^{-1} W_{hn}^\top (m_t \odot \tanh'(W_{in}x_t + b_{in} + u) \odot z_t \odot d_t).
\end{align*}
This provides a rule to propagate targets through linearized regularized inverses.

\section{Target Propagation vs Gradient or Gauss-Newton descent}\label{app:proofs}
\subsection{Gradient Back-Propagation vs Target Propagation}
\bounddiff*
\begin{proof}
	The first claim is a direct application of Lemma~\ref{lem:diff_prod_matrix} and the second claim follows from the formulation of the regularized inverse, using that $\nabla \activ^{-1}(\hidden_t) = \nabla \activ( \activ^{-1}(\hidden_t))^{-1} =  \nabla \activ( \inter_t)^{-1} $.
\end{proof}

\begin{lemma}\label{lem:diff_prod_matrix}
	Given $A_1, \ldots, A_n, B_1, \ldots, B_n \in \reals^{n\times n}$, for any matrix norm $\|\cdot\|$, and any $1 \leq t \leq n$,
	\[
	\left\| \prod_{i=1}^t A_i - \prod_{i=1}^t B_i\right\| \leq \delta \sum_{i=0}^{t-1} a^i b^{t-1-i}
	\]
	where $a = \sup_{i=1, \ldots, n} \|A_i\|$, $b = \sup_{i=1, \ldots n} \|B_i\|$ and $\delta = \sup_{i=1, \ldots, n} \|A_i-B_i\|$.
\end{lemma}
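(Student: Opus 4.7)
The plan is to prove the bound by a standard telescoping decomposition of the difference of products, followed by the triangle inequality and submultiplicativity of any matrix norm. Concretely, I would introduce the hybrid products
\[
P_j := \Bigl(\prod_{i=1}^{j} A_i\Bigr)\Bigl(\prod_{i=j+1}^{t} B_i\Bigr), \qquad j = 0, 1, \ldots, t,
\]
with the usual convention that an empty product equals the identity, so that $P_0 = \prod_{i=1}^t B_i$ and $P_t = \prod_{i=1}^t A_i$. The first step is to verify the telescoping identity
\[
\prod_{i=1}^t A_i - \prod_{i=1}^t B_i \;=\; \sum_{j=1}^{t}(P_j - P_{j-1}) \;=\; \sum_{j=1}^{t} \Bigl(\prod_{i=1}^{j-1} A_i\Bigr)(A_j - B_j)\Bigl(\prod_{i=j+1}^{t} B_i\Bigr),
\]
which follows by factoring $A_j - B_j$ from consecutive terms in the hybrid sequence.

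Next, I would apply the triangle inequality and submultiplicativity term by term. Since $\|A_i\| \leq a$, $\|B_i\| \leq b$ and $\|A_j - B_j\| \leq \delta$ for every index, each summand satisfies
\[
\bigl\| P_j - P_{j-1} \bigr\| \;\leq\; \Bigl(\prod_{i=1}^{j-1} \|A_i\|\Bigr)\,\|A_j - B_j\|\,\Bigl(\prod_{i=j+1}^{t} \|B_i\|\Bigr) \;\leq\; a^{j-1}\,\delta\,b^{t-j}.
\]
Summing over $j = 1, \ldots, t$ and reindexing with $i = j - 1$ gives
\[
\Bigl\| \prod_{i=1}^{t} A_i - \prod_{i=1}^{t} B_i \Bigr\| \;\leq\; \delta \sum_{j=1}^{t} a^{j-1} b^{t-j} \;=\; \delta \sum_{i=0}^{t-1} a^{i} b^{t-1-i},
\]
which is the claimed inequality.

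There is essentially no obstacle here; the only point that deserves a line of care is the choice of the hybrid sequence so that consecutive differences isolate exactly one factor $A_j - B_j$, with $A$'s on the left and $B$'s on the right, so that the bounds $a$ and $b$ appear in the correct exponents. As a sanity check I would verify the base case $t=1$ (where the sum collapses to $\delta$) and the symmetry of the bound under swapping the roles of $a$ and $b$, since the telescoping could equally be written with $B$'s on the left and $A$'s on the right and would give the same sum. An alternative, essentially equivalent, route would be induction on $t$: write $\prod_{i=1}^{t+1} A_i - \prod_{i=1}^{t+1} B_i = \bigl(\prod_{i=1}^t A_i\bigr)(A_{t+1}-B_{t+1}) + \bigl(\prod_{i=1}^t A_i - \prod_{i=1}^t B_i\bigr) B_{t+1}$ and apply the inductive hypothesis, but the telescoping argument is cleaner and I would present that one.
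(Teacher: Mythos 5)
Your proof is correct, and it is essentially the same argument as the paper's: the paper peels off one factor and bounds $\delta_t \leq a\,\delta_{t-1} + \delta\, b^{t-1}$ recursively, which when unrolled is exactly your telescoping sum $\delta \sum_{i=0}^{t-1} a^i b^{t-1-i}$. Nothing further is needed.
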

\begin{proof}
	Define for $t\geq 1$, $\delta_t = 	\| \prod_{i=1}^t A_i - \prod_{i=1}^t B_i\|$, we have
	\begin{align*}
	\delta_t  & \leq \left\| A_t\left( \prod_{i=1}^{t-1}A_i - \prod_{i=1}^{t-1}B_i\right) + (A_t-B_t) \prod_{i=1}^{t-1} B_i\right\|
	\leq a \delta_{t-1} + \delta b^{t-1} \leq \delta \sum_{i=0}^{t-1} a^i b^{t-1-i}.
	\end{align*}
\end{proof}
A convergence to a stationary point  for TP can be derived from classical results on an approximate gradient descent detailed below (the proof is akin to the results of \citet{devolder2014first}).
\begin{corollary}[Corollary of Lemma~\ref{prop:approx_grad_cvg}]
	Denote $\varepsilon_k$ a bound on the difference between the oracle returned by gradient back-propagation and by target-propagation both applied to the whole dataset. Provided that the objective is $L$-smooth and the stepsizes of TP are chosen such that $\stepsize = \stepsize_\hidden \stepsize_\outpt <1/2L$, after $k$ iterations, we get 
		\[
	\min_{i\in\{0, \ldots, k-1\}}\|\nabla F(\theta_i)\|_2^2 \leq c_1 {\frac{F(\theta_0)  -\min_{\theta \in \reals^d} F(\theta)}{\stepsize k}}  +  \frac{c_2}{{k}}\sum_{i=0}^{k-1} \varepsilon_i^2.
	\]
	where $F(\theta_i) = \frac{1}{n}\sum_{i=1}^{n} \loss(\phi(x_i, \theta), y_i)$ with  $\phi(x_i, \theta)$ the output of the RNN on a sample $x_i$ and $\loss$ the chosen loss. 
\end{corollary}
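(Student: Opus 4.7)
The plan is to view each iteration of target propagation as an inexact gradient step on $F$ with \emph{effective} stepsize $\stepsize=\stepsize_\hidden\stepsize_\outpt$, so that the corollary reduces to a direct application of the classical approximate-gradient-descent lemma (Lemma~\ref{prop:approx_grad_cvg}). Concretely, inspecting the backward recursion from Sec.~\ref{sec:graph}, the hidden displacement $\disp_\horizon=-\stepsize_\hidden\partial_\hidden\loss$ is linear in $\stepsize_\hidden$, and this linearity is preserved by each $\disp_{t-1}=\partial_\hidden\dyn_{\param,t}^{-1}(\hidden_t)^\top\disp_t$ and by the final sum defining $\descent_{\param_\hidden}$. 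Together with the outer update $\param_\hidden^{\nxt}=\param_\hidden+\stepsize_\outpt\descent_{\param_\hidden}$, this shows the per-sample TP update is $\theta_{k+1}=\theta_k-\stepsize\, g_k$ where $g_k$ is a descent direction that equals what one would obtain by running TP with unit stepsizes. Since gradient back-propagation admits the same rewriting with $g_k$ replaced by $\nabla F(\theta_k)$ (applied to the whole dataset as the statement assumes), the definition of $\varepsilon_k$ yields $\|g_k-\nabla F(\theta_k)\|_2\leq\varepsilon_k$.

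Next I would apply the descent inequality implied by $L$-smoothness,
\[
F(\theta_{k+1})\le F(\theta_k)-\stepsize\langle\nabla F(\theta_k),g_k\rangle+\tfrac{L\stepsize^2}{2}\|g_k\|_2^2,
\]
and decompose $g_k=\nabla F(\theta_k)+e_k$ with $\|e_k\|_2\le\varepsilon_k$. Young's inequality on the cross term gives $-\langle\nabla F(\theta_k),e_k\rangle\le\tfrac{1}{2}\|\nabla F(\theta_k)\|_2^2+\tfrac{1}{2}\varepsilon_k^2$, and the expansion of $\|g_k\|_2^2$ yields $\|g_k\|_2^2\le 2\|\nabla F(\theta_k)\|_2^2+2\varepsilon_k^2$. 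Collecting terms, one obtains
\[
F(\theta_{k+1})\le F(\theta_k)-\stepsize\bigl(\tfrac{1}{2}-L\stepsize\bigr)\|\nabla F(\theta_k)\|_2^2+\stepsize\bigl(\tfrac{1}{2}+L\stepsize\bigr)\varepsilon_k^2,
\]
and under the assumption $\stepsize<1/(2L)$ the coefficient of $\|\nabla F(\theta_k)\|_2^2$ is a strictly positive constant $\kappa$ depending only on $L\stepsize$.

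Finally I would rearrange this into $\|\nabla F(\theta_k)\|_2^2\le\kappa^{-1}\stepsize^{-1}\bigl(F(\theta_k)-F(\theta_{k+1})\bigr)+\kappa'\varepsilon_k^2$ for a matching constant $\kappa'$, telescope over $k=0,\dots,K-1$ using $F(\theta_K)\ge\min_\theta F(\theta)$, divide by $K$, and bound the average of $\|\nabla F(\theta_k)\|_2^2$ from below by its minimum over $k$. Setting $c_1=\kappa^{-1}$ and $c_2=\kappa'$ yields exactly the inequality in the statement.

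The only non-routine step is the first one: justifying that the two stepsizes of TP combine multiplicatively into a single effective stepsize $\stepsize$, which in turn licenses the use of Lemma~\ref{prop:approx_grad_cvg} with error $\varepsilon_k$ defined as in the statement rather than as a scaled quantity. Once this factorization is established, the remainder is the standard inexact-gradient telescoping argument of \citet{devolder2014first} and no new technical ingredient is required.
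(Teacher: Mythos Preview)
Your proposal is correct and follows essentially the same route as the paper. The paper does not give a separate proof of the corollary: it simply states it as an instance of Lemma~\ref{prop:approx_grad_cvg} (the inexact-gradient descent lemma), whose proof uses the same smoothness descent inequality, a Young-type bound on the cross term, and telescoping. Your first paragraph makes explicit the reduction the paper leaves implicit, namely that the two TP stepsizes factor into a single effective stepsize $\stepsize=\stepsize_\hidden\stepsize_\param$ so that one is exactly in the setting of the lemma.

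The only minor difference is in the Young step: the paper applies $2ab\le \theta a^2+\theta^{-1}b^2$ with $\theta=1/4$ directly to $\|\nabla F\|\cdot\|e_k\|$ (after isolating the exact cross term), which yields \emph{universal} constants $c_1=16/11$, $c_2=32/11$ independent of $L\stepsize$. Your cruder split $-\langle\nabla F,e_k\rangle\le\tfrac12\|\nabla F\|^2+\tfrac12\varepsilon_k^2$ together with $\|g_k\|^2\le 2\|\nabla F\|^2+2\varepsilon_k^2$ produces $c_1=(\tfrac12-L\stepsize)^{-1}$ and $c_2=(\tfrac12+L\stepsize)/(\tfrac12-L\stepsize)$, which depend on $L\stepsize$ and degrade as $\stepsize\to 1/(2L)$. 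Since the corollary as stated does not claim universality of $c_1,c_2$, this is harmless here; if you want constants matching the lemma exactly, just swap in the paper's parameterized Young inequality.
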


\begin{lemma}\label{prop:approx_grad_cvg}
	Let $f: \reals^d \rightarrow \reals$ be a $L$-smooth function. Consider an $\varepsilon$-approximate gradient descent on $f$ with step-size $\stepsize\leq1/(2L)$, i.e.,
	$
	\var_{k+1} = \var_k - \stepsize \widehat \nabla f(\var_k),
	$
	where $\|\widehat \nabla f(\var_k) - \nabla f(\var_k)\|_2 \leq \varepsilon_k$. After $k$ iterations, this method satisfies, 	for $c_1, c_2$ two universal constants,
	\[
	\min_{i\in\{0, \ldots, k-1\}}\|\nabla f(\var_i)\|_2^2 \leq c_1 {\frac{f(\var_0)  -\min_{\var \in \reals^d} f(\var)}{\stepsize k}}  +  \frac{c_2}{{k}}\sum_{i=0}^{k-1} \varepsilon_i^2.
	\]
\end{lemma}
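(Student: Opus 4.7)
The plan is to follow the standard recipe for analysing an inexact gradient descent via the descent lemma for $L$-smooth functions, so there is no deep difficulty, just careful bookkeeping of constants. I would start from the quadratic upper bound coming from $L$-smoothness:
\[
f(\var_{k+1}) \le f(\var_k) + \langle \nabla f(\var_k), \var_{k+1}-\var_k\rangle + \tfrac{L}{2}\|\var_{k+1}-\var_k\|_2^2,
\]
and substitute $\var_{k+1}-\var_k = -\stepsize \widehat\nabla f(\var_k)$. Writing $e_k := \widehat\nabla f(\var_k) - \nabla f(\var_k)$ with $\|e_k\|_2\le\varepsilon_k$, this gives
\[
f(\var_{k+1}) \le f(\var_k) - \stepsize \|\nabla f(\var_k)\|_2^2 - \stepsize\langle \nabla f(\var_k), e_k\rangle + \tfrac{L\stepsize^2}{2}\|\nabla f(\var_k)+e_k\|_2^2.
\]

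Next I would expand the last square using $\|a+b\|_2^2 \le 2\|a\|_2^2+2\|b\|_2^2$, and exploit the stepsize bound $\stepsize\le 1/(2L)$ (so $L\stepsize^2 \le \stepsize/2$) to absorb the $\|\nabla f(\var_k)\|_2^2$ term arising from the quadratic correction. To handle the cross term $-\stepsize\langle\nabla f(\var_k), e_k\rangle$, I would apply Young's inequality $|\langle a,b\rangle| \le \tfrac{1}{4}\|a\|_2^2 + \|b\|_2^2$, which yields an additional $\tfrac{\stepsize}{4}\|\nabla f(\var_k)\|_2^2 + \stepsize\varepsilon_k^2$. Combining these estimates, the descent inequality becomes
\[
f(\var_{k+1}) \le f(\var_k) - \tfrac{\stepsize}{4}\|\nabla f(\var_k)\|_2^2 + C\stepsize\,\varepsilon_k^2,
\]
for some explicit universal constant $C$ (a careful count gives $C=3/2$).

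Finally, I would telescope this inequality from $i=0$ to $i=k-1$, using $f(\var_k)\ge \min_\var f(\var)$ on the left, divide both sides by $\stepsize k/4$, and upper-bound the average of $\|\nabla f(\var_i)\|_2^2$ by its minimum, giving exactly the stated bound with $c_1 = 4$ and $c_2$ proportional to $C$. I don't expect any real obstacle; the only point requiring attention is choosing the Young's-inequality split so that the $\|\nabla f(\var_k)\|_2^2$ coefficient stays strictly positive after combining with the $L\stepsize^2$ term, which is exactly what the assumption $\stepsize\le 1/(2L)$ is designed to guarantee.
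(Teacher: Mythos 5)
Your proposal is correct and follows essentially the same route as the paper's proof: the $L$-smoothness descent lemma, Young's inequality on the cross term $-\stepsize\langle\nabla f(\var_k), e_k\rangle$, absorption of the quadratic correction via $\stepsize \le 1/(2L)$, and telescoping. The only difference is that you bound $\|\nabla f(\var_k)+e_k\|_2^2$ by $2\|\nabla f(\var_k)\|_2^2+2\|e_k\|_2^2$ where the paper expands the square exactly and tracks the resulting cross term, which merely yields different (but still universal) constants.
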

\begin{proof}
	Denote $g_k = \widehat \nabla f(\var_k) - \nabla f(\var_k)$ for all $k\geq 0$. By $L$-smoothness of the objective, the iterations of the approximate gradient descent satisfy
	\begin{align*}
		f(\var_{k+1}) & \leq f(\var_k) + \nabla f(\var_k)^\top (\var_{k+1} - \var_k) + \frac{L}{2}\|\var_{k+1}-\var_k\|_2^2 \\
		& =  f(\var_k) - \stepsize \|\nabla f(\var_k)\|_2^2 -\stepsize \nabla f(\var_k)^\top g_k + \frac{L\stepsize^2}{2}\|\nabla f(\var_k)+ g_k\|_2^2 \\
		&  = f(\var_k) - \stepsize\left(1-\frac{L\stepsize}{2}\right)\|\nabla f(\var_k)\|_2^2 + \frac{L\stepsize^2}{2}\|g_k\|_2^2  + \stepsize(L\stepsize - 1)\nabla f(\var_k)^\top g_k \\
		& \leq f(\var_k) - \stepsize\left(1-\frac{L\stepsize}{2}\right)\|\nabla f(\var_k)\|_2^2 + \frac{L\stepsize^2}{2}\|g_k\|_2^2  + \stepsize(1- L\stepsize)\|\nabla f(\var_k)\|_2 \|g_k\|_2,
	\end{align*}
	where in the last inequality we bounded the absolute value of the last term and used that $\stepsize L \leq 1$. Now we use that for any $a,b \in \reals$ and $\theta>0$, $2ab \leq \theta a^2 + \theta^{-1} b^2$, which gives for $\theta >0$, $a = \sqrt{\stepsize(1- L\stepsize)/2}\|\nabla f(\var_k)\|_2$ and $b = \sqrt{\stepsize(1- L\stepsize)/2} \|g_k\|_2$, 
	\begin{align*}
		f(\var_{k+1}) \leq f(\var_k) - \stepsize\left(1-\frac{L\stepsize+\theta(1-L\stepsize)}{2} \right)\|\nabla f(\var_k)\|_2^2 + \frac{L\stepsize^2 + \theta^{-1}\stepsize(1-L\stepsize)}{2}\|g_k\|_2^2.
	\end{align*}
	Using $ 0\leq L\stepsize \leq 1/2$, $\theta =1/4$ and $\|g_k\|_2^2 \leq \varepsilon_k^2 $, we get 
	$
	f(\var_{k+1}) \leq f(\var_k) - \frac{11}{16}\stepsize \|\nabla f(\var_k)\|_2^2 + 2\stepsize\varepsilon_k^2.
	$
	Rearranging the terms, summing from $i=0, \ldots, k-1$, taking the minimum, dividing by $k$ we get the result.  
\end{proof}

\subsection{Target Propagation vs Gauss-Newton updates}
We discuss the interpretation of Target Propagation (TP) as a Gauss-Newton (GN) method which was proposed by
\citet{bengio2020deriving, meulemans2020theoretical}.
As already mentioned in Sec.~\ref{sec:graph}, the main similarity between TP and GN is the fact that both TP and GN use the inverse or approximations of inverses of the gradients. In this section, we shall discuss this interpretation for feed-forward networks to follow the claims of \citet{meulemans2020theoretical}. Namely, we consider here a network defined by $L$ weights $W_1, \ldots, W_L$ and $L$ activation functions $a_1, \ldots, a_L$ which transform an input $x_0$ into an output $x_L$ by computing (no biases were considered by~\citet{meulemans2020theoretical}),
\[
x_{t} = f_{t}(x_{t-1}) = a_t(W_t x_{t-1}) \quad \mbox{for} \ t \in \{1, \ldots, L\}
\]
Denoting $\phi(x;\theta)$ the output of the network for an input $x= x_0$, with $\theta = (W_1, \ldots, W_L)$ being the parameters of the network, the objective consists in minimizing the loss between the outputs of the network and the sample outputs, i.e., minimizing $\mathcal{L}(y, \phi(x;\theta))$ for pairs of inputs outputs samples $(x, y)$. 

\paragraph{GN step}
Recall first the rationale of a GN step for such feed-forward networks with a squared-loss, which amount to solving 
\[
\min_{\theta\in \reals^p} \frac{1}{n} \sum_{i=1}^n \|\phi(x_i;\theta) - y_i\|_2^2 ,
\]
with $y_i \in \reals^{K}$ (for classification in $K$ classes) and $\phi(x_i, \theta) \in \reals^{d_L}$. 
A GN step amounts to linearize the non-linear function $\phi$ around a current set of parameters $\theta^{(k)}$ and solve the corresponding least-square problems to define the next set of parameters, i.e,
\begin{align*}
\theta^{(k+1)} & = \argmin_\theta \frac{1}{n} \sum_{i=1}^n \|\phi(x_i;\theta^{(k)}) + \partial_\theta \phi(x_i;\theta^{(k)})^\top(\theta- \theta^{(k)}) - y_i\|_2^2 \\
& = \theta^{(k)} - 
\left(\sum_{i=1}^n \partial_\theta \phi(x_i;\theta^{(k)})\partial_\theta \phi(x_i;\theta^{(k)})^\top \right)^{-1}
\left( \sum_{i=1}^n \partial_\theta \phi(x_i;\theta^{(k)})  \left(\phi(x_i, \theta^{(k)}) - y_i\right)\right).
\end{align*}
To consider TP as an approximate GN method we need the following considerations. 
\begin{enumerate}
	\item Consider the iteration on a mini-batch of size 1, s.t.
	\[
	\theta^{(k+1)} = \theta^{(k)} - 
	\left(\partial_\theta \phi(x_i;\theta^{(k)})\partial_\theta \phi(x_i;\theta^{(k)})^\top \right)^{-1}
	\left(  \partial_\theta \phi(x_i;\theta^{(k)})  \left(\phi(x_i, \theta^{(k)}) - y_i\right)\right).
	\]
	\item Consider that the gradients of the networks are invertible, s.t. 
	\[
	\theta^{(k+1)} = \theta^{(k)} - 
	\left(\partial_\theta \phi(x_i;\theta^{(k)})\right)^{-\top}
	  \left(\phi(x_i, \theta^{(k)}) - y_i\right).
	\]
	\item Consider updating only one set of parameters $\theta_l = W_l$ , s.t., 
		\[
	\theta_l^{(k+1)} = \theta_l^{(k)} - 
	\left(\partial_{\theta_l} \phi(x_i;\theta^{(k)})\right)^{-\top}
	\left(\phi(x_i, \theta^{(k)}) - y_i\right).
	\]
	with 
	\[
	\partial_{\theta_l} \phi(x_i;\theta^{(k)}) = \partial_{\theta_l} f_l(x_{l-1})   \partial_{x} f_{l+1}(x_l)  \ldots \partial_{x} f_{L}(x_{L-1})
	\]
	so that, provided that all matrices inside the matrix multiplication are invertible,  we get
	\[
	\partial_{\theta_l} \phi(x_i;\theta^{(k)})^{-T} = \partial_{\theta_l} f_l(x_{l-1})^{-T}   \partial_{x} f_{l+1}(x_l)^{-T}  \ldots \partial_{x} f_{L}(x_{L-1})^{-T}
	\]
	\item Finally, ignore the last inversion and replace it by a gradient step on the parameters $\theta_l$, then we get an iteration similar to TP, with 
			\[
	\theta_l^{(k+1)} = \theta_l^{(k)} - 
\partial_{\theta_l} f_l(x_{l-1})   \partial_{x} f_{l+1}(x_l)^{-T}  \ldots \partial_{x} f_{L}(x_{L-1})^{-T} \partial_{x_L} \mathcal{L}(y, x_L)
	\]
	for $\mathcal{L}$ a squared loss. Namely, we keep the inversion of the gradients of the intermediate functions. 
\end{enumerate}
Our objective here is to question whether viewing TP as a GN step with the approximations explained above is meaningful or not.

\originalparagraph{Does the original TP formulation approximate GN?}
\citet{meulemans2020theoretical} start by considering the original TP formulation, i.e., targets computed as 
$
\target_t = \psi_t(\target_{t+1})
$
for $\psi_t$ an approximate inverse of $f_t$ and with $\target_L = x_L - \eta \partial_{x_L}\loss(y, x_L)$. \citet[Lemma 1]{meulemans2020theoretical} show then that, provided that we use the exact inverse, $\psi_t = f_t^{-1}$, 
\[
\Delta x_t = \target_t - x_t = - \eta \prod_{s=t}^{L-1} \partial_{x_s} f_{s+1}(x_s)^{-\top}  \partial_{x_L}\loss(y, x_L) + O(\eta^2).
\]
\citep[Theorem 2]{meulemans2020theoretical} conclude that (i) for mini-batches of size 1, (ii) for a squared loss, (iii) for invertible $f_t$, as $\eta \rightarrow 0$, TP uses a Gauss-Newton optimization with block diagonal approximation to compute the targets  in the sense that as $\eta \rightarrow 0$, 
\[
\Delta x_t  \approx - \eta \partial_{x_t} (f_{t+1} \circ  \ldots \circ f_L)^{-\top }(x_t).
\]
As the stepsize of any optimization algorithm tends to 0, they all are the same, since the update would be 0 in all cases. An optimization algorithm aims not to have infinitesimal stepsizes.
To make the claim of \citet{meulemans2020theoretical} more precise, the constants hidden in $O(\eta^2)$ need to be detailed in order to understand in which regimes of the stepsize the approximation is meaningful. Assuming the inverses $\psi_t$ to be $\ell$ Lipschitz continuous and $L$-smooth (i.e. with $L$-Lipschitz continuous gradients), a quick look at the proof of Lemma 1 of \citet{meulemans2020theoretical} shows that 
\begin{align*}
\target_t - x_t & = - \eta \prod_{s=t}^{L-1} \partial_{x_s} f_{s+1}(x_s)^{-\top}  \partial_{x_L}\loss(y, x_L) + \xi_t\\
\|\xi_t\|_2 &\leq a_t \\
a_s & \leq La_{s+1}^2 + \ell a_{s+1} + L\ell^2 \eta^2\| \partial_{x_L}\loss(y, x_L)\|_2^2 \quad \mbox{for} \ s \in\{t, \ldots, L-1\} \\
a_L & \leq \frac{L}{2}\eta^2\| \partial_{x_L}\loss(y, x_L)\|_2^2.
\end{align*}
The above bound shows that $\|\xi_t\|_2$ grows w.r.t. the stepsize $\eta$ as a polynomial with leading term  $\eta^{2^{L-t}}$. So unless the stepsize is extremely small, it seems unclear whether the original TP  formulation approximates GN in this case. Though the above bound may be pessimistic, it captures correctly the dependency of the error w.r.t. $\eta$. True, as $\eta \rightarrow 0$, the leading term is in $O(\eta^2)$, but generally stepsizes are computed for $\eta$ not infinitesimally small.

Finally, if the similarity of TP with GN could explain its efficiency, then by the reasoning of~\citet{meulemans2020theoretical}, the original TP formulation should be efficient. Yet, the original TP formulation has never been shown to produce satisfying results.

\originalparagraph{Does TP with the difference target propagation approximate GN?}
\citet{meulemans2020theoretical} make a similar claim for TP with the Difference Target Propagation formula, i.e., $\target_t = x_t + \psi_t(\target_{t+1}) - \psi_t(x_{t+1})$. Namely, \citet[Lemma 3]{meulemans2020theoretical} show that 
\[
\Delta x_t = \target_t - x_t = - \eta \prod_{s=t}^{L-1} \partial_{x_s} \psi_{s}(x_s)^{\top}  \partial_{x_L}\loss(y, x_L) + O(\eta^2).
\]
Once again, for the claim to be meaningful beyond infinitesimal stepsizes, the terms in $O(\eta^2)$ need to be detailed. A quick look at the proof of \citet[Lemma 3]{meulemans2020theoretical} shows that under appropriate smoothness assumptions the error can be bounded as a polynomial in $\eta$ with a leading term  $\eta^{2^{L-t}}$. So again, unless we consider infinitesimal stepsizes, it is unclear whether this approximation is useful.  

\paragraph{Linearized target propagation and GN}
If we use a linearized version of the difference target propagation formula as presented in~\eqref{eq:newtarg}, namely $\target_t -x_t = \partial_{x_{t+1}} \psi_t(x_{t+1})^{\top}(\target_{t+1} - x_{t+1})$ , then we have the \emph{equality} 
\[
\Delta x_t = \target_t - x_t = - \eta \prod_{s=t}^{L-1} \partial_{x_s} \psi_{s}(x_s)^{\top}  \partial_{x_L}\loss(y, x_L) 
\]
and the idea that TP could be seen as an approximate GN method may be pursued in a meaningful way. However the error of approximation of the inverse of the gradients must be taken into account in order to understand the validity of the approach. 

\paragraph{Propagating the approximation error of the gradient inverses}
We  compute the approximation error incurred by composing gradients of the inverse instead of inverses of gradients.
Formally, the  approximation error for one layer can be estimated  under the assumption that 
\begin{equation}\label{eq:asm_approx}
	\psi_{t}(\dyn_{t}(x_{t-1})) = x_{t-1} + e(x_{t-1}),
\end{equation}
 with $e$ an  $\varepsilon$-Lipschitz continuous  function and the assumption that  the minimal singular value $ \sigma$  of ${{\partial_{x_t} f_t(x_{t-1})}}$ is positive. 

The function $e$ a priori depends on $\param$; we ignore this dependency and simply consider $e$ to be $\varepsilon$-Lipschitz continuous for all $\param$. For a  function $e$, we define its Lipschitz continuity constant as 
$
\varepsilon = \sup_{x} \sup_{\|\lambda\|_2\leq 1}  \|\partial_x e(x)^\top \lambda\|_2 = \sup_x \|\partial_x e(x)\|,
$
where $\|\cdot\|$ denotes the spectral norm.
By differentiating both sides of Eq.~\eqref{eq:asm_approx}, we get
\[
\partial_{x} \dyn_{ t}(x_{t-1}) \partial_{x} \psi_t(x_t) = \idm + \partial_x e(x_{t-1}).
\]
By assuming the minimal singular value $\sigma$ of $\partial_{x} \dyn_{t}(x_{t-1})$ to be positive, we get that $\partial_{x} \dyn_{t}(x_{t-1}) $ is invertible and so 
\[
\partial_{x} \psi_t(x_t) = \partial_{x} \dyn_{t}(x_{t-1})^{-1} (\idm + \partial_x e(x_{t-1})).
\]
Hence 
\begin{equation}\label{eq:approx_layer}
	\|\left(\partial_x \dyn_{t}(x_{t-1})\right)^{-1}- \partial_x \psi_t(x_t)\|
	\leq
	\frac{\varepsilon}{\sigma},
\end{equation}
and $\partial_x \psi_t(x_t)$ is $\sigma^{-1}(1+\varepsilon)$ Lipschitz-continuous. 

Now for multiple compositions, using Lemma~\ref{lem:diff_prod_matrix}, we get 
\[
\|\left(\partial_\hidden \dyn_{1}(x_{0})\right)^{-1}\ldots \left(\partial_x \dyn_L(x_{L-1})\right)^{-1} 
- \partial_x \psi_{1}(x_1) \ldots \partial_x \psi_L(x_L)\| \leq \frac{(1+ \varepsilon)^L}{\sigma^L}.
\]
Therefore the accumulation error diverges with  the length $L$ of the network as soon as $\varepsilon \geq \sigma -1$.

\paragraph{Testing the hypothesis that TP could be interpreted as using GN updates directions}
Here we come back to the setting of RNNs presented in the paper. In this case the length of the compositions of layers is $\horizon$ and according to the previous discussion, the error of approximation of the product of the inverse of the gradients by the product of the gradients of the approximate inverses could easily diverge as $\horizon$ grows (long sequences). Nevertheless, by using analytical formulas for the inverses, we can ensure that the approximation error is zero, which would correspond then to the ideal setting where TP uses GN update directions for the hidden states.

Formally, in the context of RNNs, a Gauss-Newton update direction for the hidden states is given as (ignoring the inverse of the output function) 
\[
-\stepsize_\hidden \prod_{s=t+1}^{\horizon-1} \left(\partial_{\hidden} \dyn_{t+1, \param}(\hidden_t)\right)^{-\top}
{\partial_\hidden \loss(y, \pred_\param(\hidden_\horizon))}, 
\]
If no regularization is used in the definition of the regularized inverse, i.e., if we use 
\[
\dyn_{\param, t}^{-1}(\hidden_t) = (W_{\hidden\hidden}^\top W_{\hidden\hidden})^{-1}W_{\hidden \hidden}^\top( \activ^{-1}(\hidden_t) -W_{\inpt \hidden} \inpt_t  - b_\hidden),
\]
which requires the inverse of $W_{\hidden\hidden}$ to be well defined, we would get 
\[
\partial \dyn_{\param, t}^{-1}(\hidden_t) = \partial_{\hidden} \dyn_{t+1, \param}(\hidden_t)^{-1}.
\]
The updates of TP using the formula~\eqref{eq:newtarg} would then be exactly the ones of a GN update direction, i.e., 
\[
\target_t -\hidden_t = -\stepsize_\hidden \prod_{s=t+1}^{\horizon-1} \left(\partial_{\hidden} \dyn_{t+1, \param}(\hidden_t)\right)^{-\top}
{\partial_\hidden \loss(y, \pred_\param(\hidden_\horizon))}.
\]
So by considering our implementation without regularization, we can test whether the interpretation of TP as an approximate GN method is meaningful in terms of optimization convergence. As shown in Fig.~\ref{fig:regimes}, it appears that regularizing the inverses is necessary to obtain convergence, hence the interpretation of TP as GN may not be sufficient to explain why TP can converge.

\section{Experimental Details}\label{app:exp_details}
\subsection{Initialization and hyper-parameters}
\paragraph{Initialization and data generation}
In all experiments, the weights of the RNN are initialized as random orthogonal matrices, and the biases are initialized as 0 as presented by \citet{le2015simple} and~\citet{manchev2020target}. 
For all experiments, the data was not normalized, as done by \citet{manchev2020target}. We kept a setting as similar as possible as the one of \citet{manchev2020target} to be able to compare target propagation with regularized or parameterized inverses.

\begin{table}
	\begin{center}
		\bgroup
		\def\arraystretch{2}
		\begin{tabular}{l c ccc}
			\multicolumn{1}{c}{} & \multicolumn{1}{c}{BP} &  & \multicolumn{1}{c}{TP}&  \\
			&$\gamma $ & $\gamma_h$ & $\gamma_\theta$ &$ \kappa$\\ 
			\toprule
			Temporal order problem length 60& $10^{-5}$ &$ 10^{-2} $& $10^{-1}$& $10$ \\
			Temporal order problem length 120& $10^{-5}$ &$ 10^{-2} $& $10^{-2}$& $1$ \\
			Adding problem & $10^{-3}$ & $10^{-1}$ & $10^{-1}$ & $1$ \\
			MNIST pixel by pixel & $10^{-6}$ & $10^{-4}$   & $10^{-1}$ & $1$ \\
			MNIST pixel by pixel permuted& $10^{-4}$ & $10^{-4}$   & $10^{-1}$ & $1$ \\
			CIFAR& $10^{-3}$ & $10^{-2}$   & $10^{-2}$ & $10$ \\
			FashionMNIST with GRU& $10^{-2}$ & $10^{-1}$   & $10^{-2}$ & $1$ \\
			\bottomrule
		\end{tabular}
		\egroup
	\end{center}
	\caption{Hyper-parameters chosen for Fig.~\ref{fig:synth} and~\ref{fig:exp_mnist}. \label{tab:hyper_param}}
\end{table}

\paragraph{Hyper-parameters}
In the synthetic tasks, for BP we used a momentum  of $0.9$ with Nesterov accelerated gradient scheme as done by~\citet{manchev2020target}. Otherwise, we did not use any momentum for the experiment on MNIST pixel by pixel presented in the main paper. 
The learning rates of BP and the parameters of TP were found by a grid-search on a $\log_{10}$ basis and are presented in Table~\ref{tab:hyper_param}.
We did not add a regularization term in the training of the RNNs.

For the Fig.~\ref{fig:regimes}, we used batch sizes of size 512 and performed a grid search for the stepsizes of BP and for the stepsizes $\stepsize_\hidden$ of TP while keeping the same regularization $\reg$ and stepsize $\stepsize_\param$ to the parameters found for the length 784. 

\paragraph{Software} We used Python 3.8 and PyTorch 1.6. The RNN was coded using the cuDNN implementation available in PyTorch that is highly optimized for computing forward passes on the network or gradient back-propagation.

\paragraph{Hardware}
All experiments were performed on GPUs using Nvidia GeForce GTX 1080 Ti (12G memory). Each experiment only used one gpu at a time (clock speed 1.5 Ghz).

\paragraph{Time evaluation}
On our GPU, we observed that for the MNIST pixel by pixel experiment, 200 iterations (each iteration considering 16 samples)  were taking approximately 60s for BP and 800s for TP. Note that with larger batch-sizes the cost of the regularized inversion would be  amortized by the fact that more samples are treated simultaneously. We kept the setting of Manchev and Spartling~\citep{manchev2020target} for ease of comparison.

\subsection{Additional experiments}
\paragraph{The overhead of TP can be worth its performance}
To account for the additional cost of inversion for each mini-batch, we consider the convergence of the algorithms in time rather than in iterations. We found that, on average, 1 iteration of BP takes approximately 13 times less time than one iteration of TP in our implementation (note that BP benefits from highly optimized implementations for GPU machines, and TP could potentially also benefit from the same optimized implementations). Therefore we ran BP for 13 times more iterations than TP and multiplied the number of iterations by the approximate time needed for each iteration for all algorithms. In the right panel of Fig.~\ref{fig:mnist_time_dtp}, we observe that in time too, TP performs better than BP, which stays stuck at an accuracy of approximately 22

\paragraph{Regularized inverses outperform parameterized inverses}
We evaluate the impact of using regularized inverses as opposed to parameterized inverse and linearized propagation as opposed to finite-difference-based propagation.
The variant of target propagation with parameterized inverse and finite-difference propagation corresponds to the approach of~\citet{lee2015difference} recently implemented by~\citet{manchev2020target} and referred to in the figure above as \textbf{DTP-PI}.
The variant of target propagation with regularized inverse and finite-difference propagation is referred to in the figure above as \textbf{DTP-RI}. Recall that our approach involves regularized inverses and linearized propagation, referred as \textbf{TP}. 
In Fig.~\ref{fig:mnist_time_dtp}, we observe that both \TP and DTP-RI outperform DTP-PI, demonstrating the benefits of using regularized inverses. 
On the other hand, both \TP and DTP-RI perform on par  overall, with the former being slightly better for the given parameters.

\paragraph{Target propagation is robust to the choice of the target stepsize $\stepsize_\hidden$}
In Fig.~\ref{fig:heatmap}, we observed how the convergence could be affected by the choice of the regularization and the step-size $\stepsize_\param$. 
In the left panel of Fig.~\ref{fig:add_exp}, we observe that varying $\stepsize_\hidden$ does not lead to significant changes in the convergence behavior.

\paragraph{Target propagation does not benefit from momentum techniques}
Numerous methods have been proposed to enhance the performance of a classical stochastic gradient descent by using, e.g., a momentum term akin to Nesterov's accelerated gradient formula ~\citep{sutskever2013importance}. 
Since TP also produces a priori a descent direction, we can wonder whether an additional momentum provides faster convergence. In the right panel of Fig.~\ref{fig:add_exp}, we observe that adding a momentum on TP is possible but does not seem to provide significantly faster convergence while being less stable. On the other hand, on this same figure, the momentum seems to help the gradient descent. Our preliminary experiments using Adam with TP did not conclude; namely, we were not able to obtain a convergence similar to the one illustrated in Fig.~\eqref{fig:exp_mnist} that simply used Algo.~\ref{algo:linearized_target_prop}. We leave for future work the implementation of appropriate adaptive stepsizes strategies for TP.

\begin{figure}
	\begin{center}
		\includegraphics[width=0.35\linewidth]{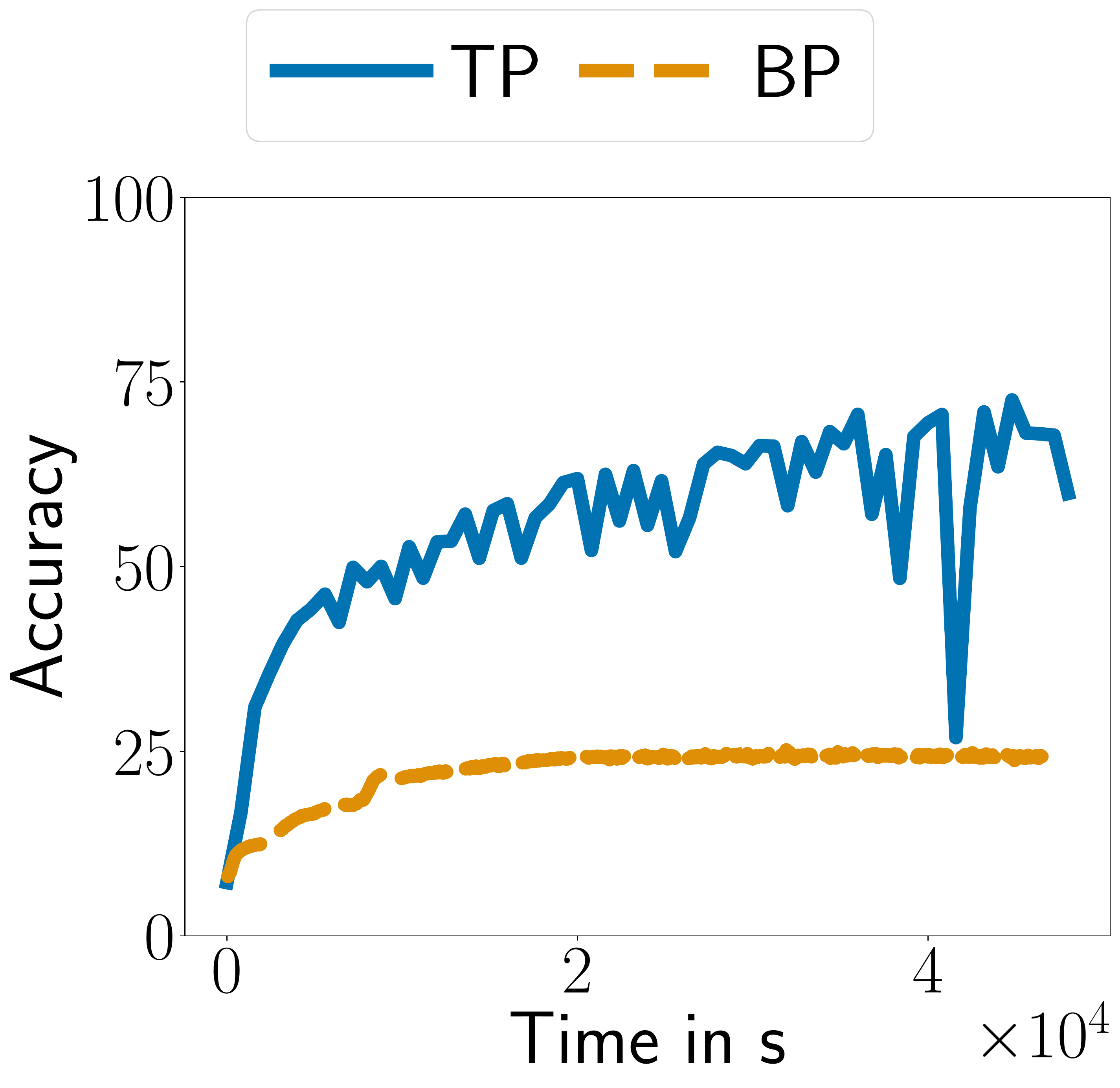}~\hspace{5em}
		\includegraphics[width=0.35\linewidth]{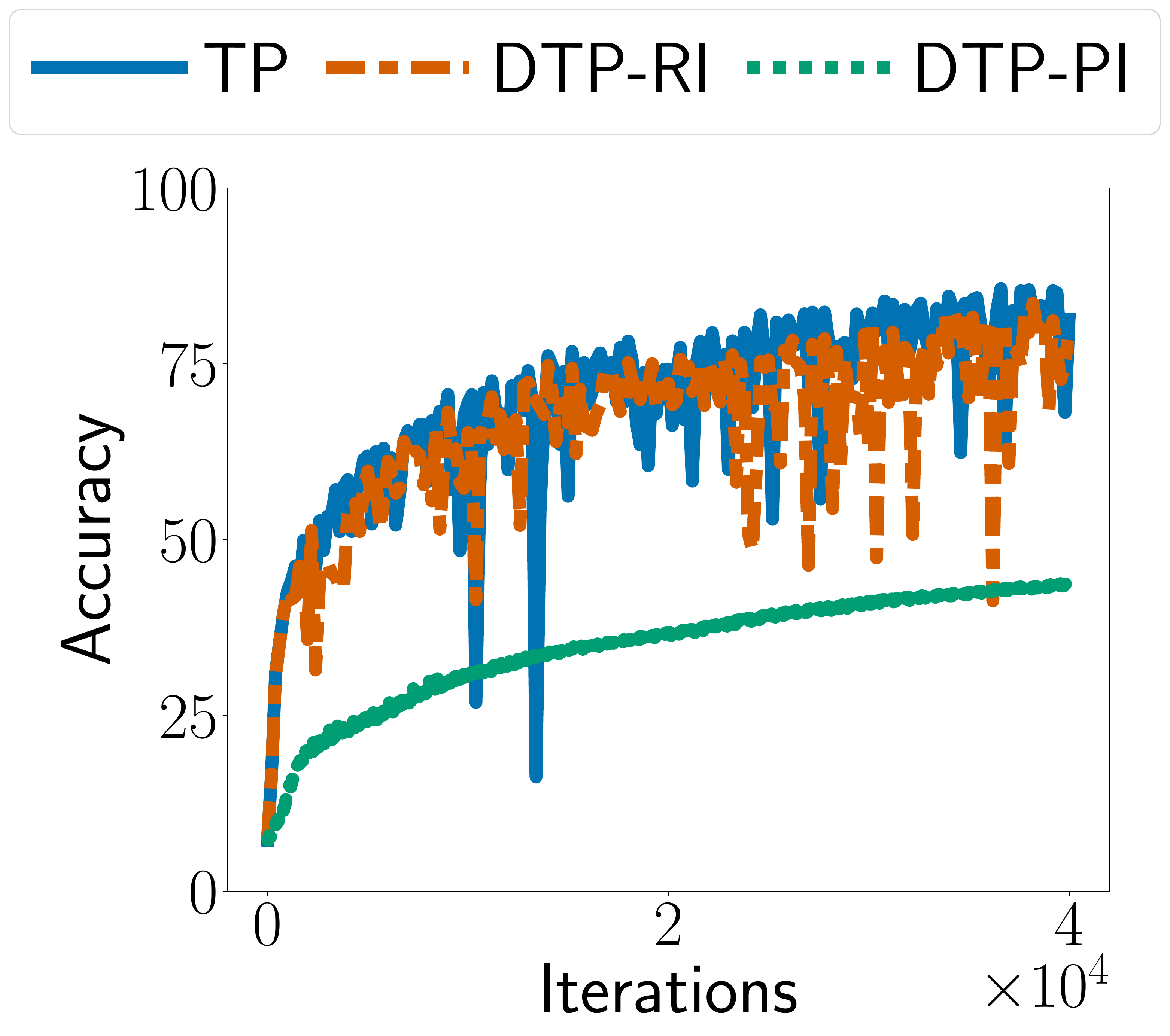}
		\caption{Left: MNIST in time\label{fig:mnist_time_dtp}. Right: Comparison of different implementations of TP.}
	\end{center}
\end{figure}

\begin{figure}
	\begin{center}
		\includegraphics[width=0.5\linewidth]{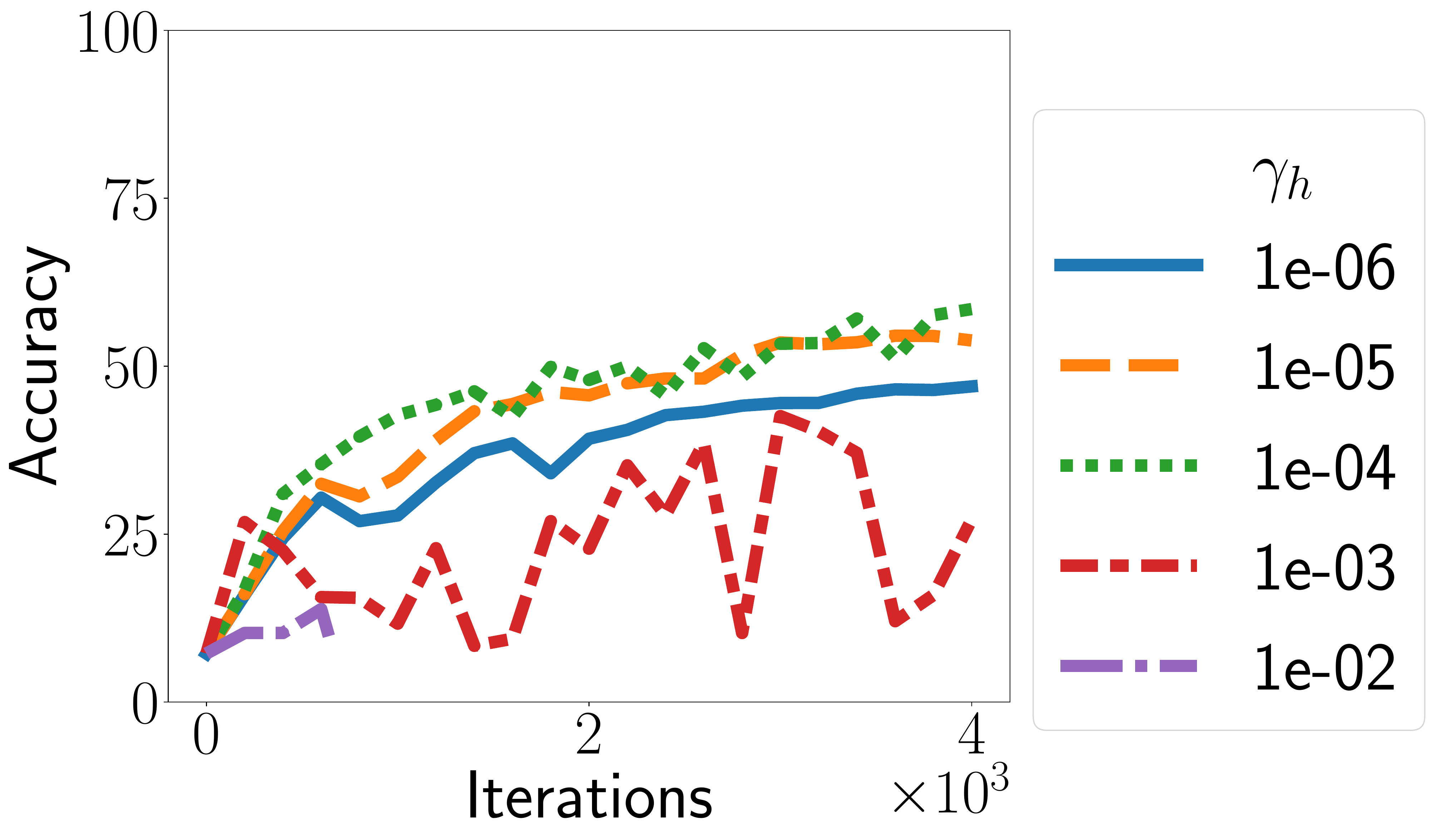}~
		\includegraphics[width=0.35\linewidth]{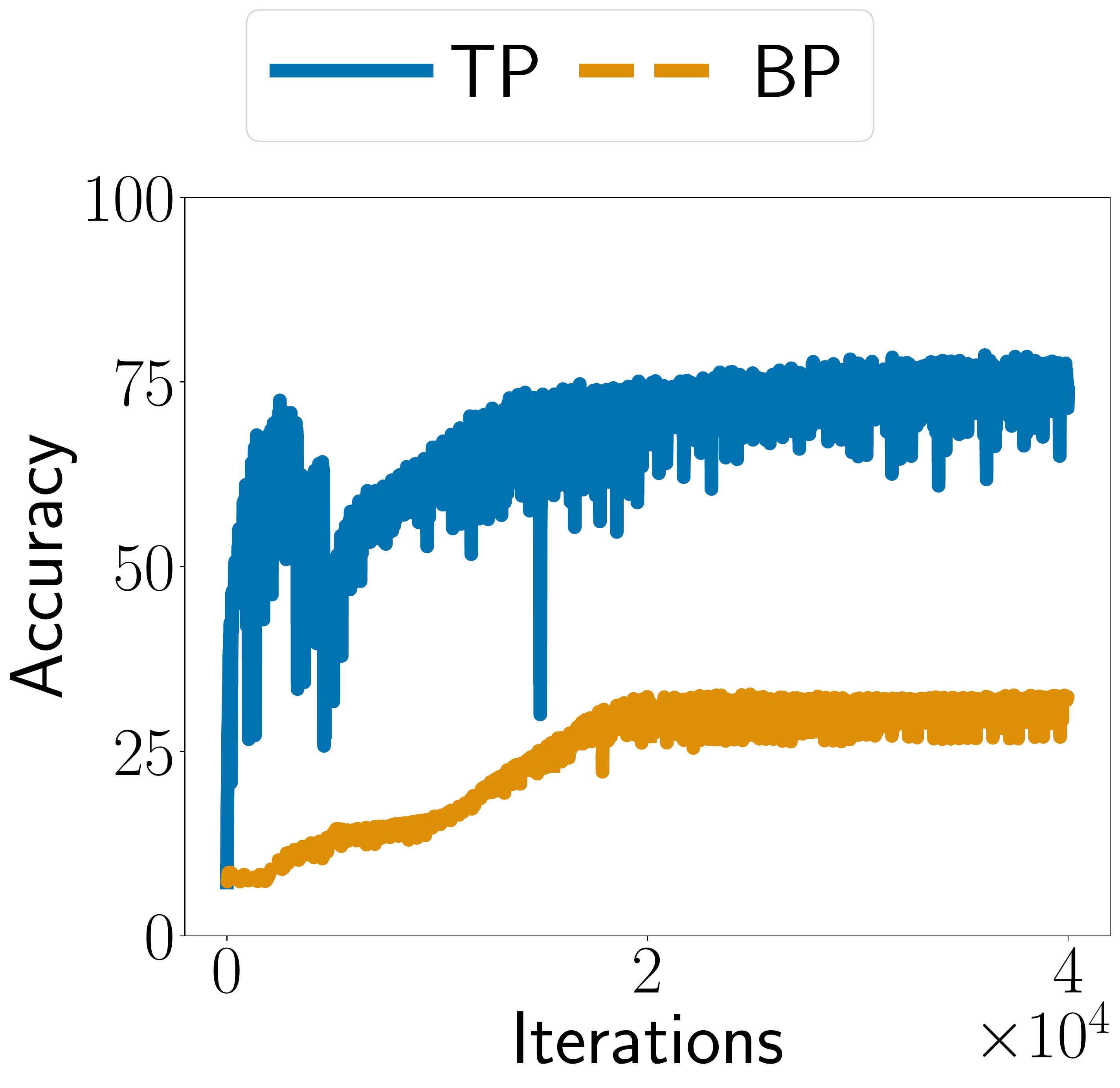}
		\caption{Left: MNIST for varying $\stepsize_\hidden$ and fixed $\stepsize_\param=1$, $\reg=1$\label{fig:add_exp}. Right: MNIST with momentum.}
	\end{center}
\end{figure}

\end{document}